\documentclass{article}
\usepackage{arxiv}
\usepackage[numbers, sort, comma, square]{natbib}

\usepackage[utf8]{inputenc}
\usepackage{tikzsymbols}
\usepackage{microtype}
\usepackage{graphicx}
\usepackage{subcaption}
\usepackage{booktabs} 
\usepackage{appendix}
\usepackage[hidelinks,colorlinks,allcolors=blue]{hyperref}       
\usepackage{enumitem}
\usepackage{algorithm,algorithmic}

\usepackage{color}
\usepackage{amssymb, amsthm}
\usepackage{amsfonts}
\usepackage{amsmath,centernot}
\usepackage{calrsfs}
\usepackage{url}
\usepackage{footmisc}
\usepackage{tikz}

\newtheorem{theorem}{Theorem}
\newtheorem{lemma}{Lemma}

\newtheorem{assumption}{Assumption}
\usepackage{epstopdf}
\usepackage{wrapfig}
\usepackage{float}
\usepackage{cleveref}
\usepackage{tikz-cd}
\usetikzlibrary{matrix, calc, arrows, automata, positioning}
\usepackage{mathabx}

\newcommand{\set}[1]{\{#1\}}			
\newcommand{\B}{\mathcal{B}}
\newcommand{\R}{\mathbb{R}}                     


\newcommand{\E}[1]{\mathrm{E}\left[ #1 \right]}
\newcommand{\Em}[2]{\mathrm{E}_{#1} \left[ #2 \right]}
\newcommand{\var}[1]{\mathrm{Var}\left( #1 \right)}
\newcommand{\Pm}{\mathrm{Pr}}         
\newcommand{\Pro}[1]{\Pm \left[#1\right]}

\newcommand{\abs}[1]{\left| #1 \right|}

\newcommand{\bigo}[1]{O\left( #1 \right)}


\newcommand{\norm}[1]{\lVert #1 \rVert}

\newcommand{\eps}{\epsilon}
\renewcommand{\epsilon}{\varepsilon}
\renewcommand{\le}{\leqslant}
\renewcommand{\leq}{\leqslant}
\renewcommand{\ge}{\geqslant}
\renewcommand{\geq}{\geqslant}
\newcommand{\rank}{\mathrm{rank}}

\newcommand{\T}{\mathbf{T}}
\newcommand{\bS}{\mathbf{S}}

\newcommand{\obs}{\textrm{obs}}

\newcommand{\Hy}{\mathcal{H}}
\newcommand{\opt}{\textrm{OPT}}

\newcommand{\nnz}{\mathit{nnz}}
\newcommand{\poly}{\textrm{poly}}
\newcommand{\sw}{\textrm{sw}}
\newcommand{\cov}{\textrm{cov}}

\newcommand{\atet}{\textrm{ATET}}

\newcommand{\CI}{\mathrel{\perp\mspace{-10mu}\perp}}

\newcount\Comments  
\Comments = 0
\newcommand{\kibitz}[2]{\ifnum\Comments=1{\color{#1}{#2}}\fi}

\newcommand{\hatT}{\widehat{\T}}

\newcommand{\checkT}{\widecheck{\T}}
\newcommand{\tildeT}{\widetilde{\T}}

\newcommand{\splitatcommas}[1]{%
	\begingroup
	\begingroup\lccode`~=`, \lowercase{\endgroup
		\edef~{\mathchar\the\mathcode`, \penalty0 \noexpand\hspace{0pt plus 1em}}%
	}\mathcode`,="8000 #1%
	\endgroup
}

\newcommand{\tA}{\tilde{A}}
\newcommand{\tL}{\tilde{L}}
\newcommand{\tY}{\tilde{Y}}

\allowdisplaybreaks

\usetikzlibrary{decorations.shapes}
\tikzset{decorate sep/.style 2 args=
	{decorate,decoration={shape backgrounds,shape=circle,shape size=#1,shape sep=#2}}}
\setlength\itemsep{0em}

\title{Weighted Tensor Completion for Time-Series Causal Inference}
\author{Debmalya Mandal \\
Data Science Institute, \\
Columbia University \\
\texttt{dm3557@columbia.edu}\\
\And
David C. Parkes\\
Paulson School of Engineering and Applied Sciences,\\
Harvard University\\
\texttt{parkes@eecs.harvard.edu}\\
}

\begin{document}
\maketitle

\begin{abstract}
 	Marginal Structural Models  (MSM) are the
	most popular models for causal inference from time-series
	observational data. However, they have two main drawbacks:
	(a) they do not capture subject heterogeneity, and (b) they
	only consider fixed time intervals and do not scale gracefully with longer intervals.
	In this work, we propose a new family of MSMs to
	address these two concerns. We model the potential outcomes
	as a three-dimensional tensor of low rank, where the three
	dimensions correspond to the agents, time periods and the
	set of possible histories. Unlike the traditional MSM, we
	allow the dimensions of the tensor to increase with the
	number of agents and time periods. We set up a weighted
	tensor completion problem as our estimation procedure, and
	show that the solution to this problem converges to the true
	model in an appropriate sense. Then we show how to
	solve the estimation problem, providing
	conditions under which we can 
	approximately and efficiently solve the
	estimation problem.  Finally we propose an algorithm based on projected
	gradient descent, which is easy to implement, and evaluate its performance on a simulated dataset.
\end{abstract}

\section{Introduction}
The main challenge in causal inference is the estimation of  a
causal quantity of interest from observational data. Often such datasets involve
individuals who are subject to treatments over multiple time periods, and 
we want to estimate the effect of a policy on
the outcome. 
For example, consider a ride-sharing company,  which
records several variables such as the number of trips, and trip
origins and destinations, for each rider, and based on this
information decides whether or not to provide monthly discounts.
After running this experiment for several months, the company is
interested to know whether providing discounts increases the number of
trips taken. If the answer is yes, the company might also want to find
a policy that would further increase the number of trips taken.

A second example comes from~\citet{ANRR14}, who
consider a fundamental problem in political science: does
democracy cause economic development, in relation to
autocracy? The authors collect data from 184 countries over
more than half a century, including  GDP per capita, current 
policital situation (democracy or autocracy), net financial inflow etc.The
goal is to find out whether democracy increases GDP of the
countries over the periods when the country was under
democracy. 

%
The main question underlying the two examples is the following: what is the effect of a treatment
policy over the subjects who are assigned the treatment?  This
quantity is known as the {\em average treatment effect over the
	treated} (ATET).
%
%
%
The main challenge in estimating the effect of time-varying treatments on the outcomes
is the presence of time-varying confounders. These are the variables that affect both the outcomes
and time-varying treatments.  


In a seminal work, \citet{Robins00} proposed {\em Marginal Structural Models} to model
the potential outcomes under time-varying treatments and showed how to remove the bias
due to the presence of time-varying confounders. Even though
{\em Marginal Structural Models} (MSMs)~\cite{Robins00} are widely
used to perform causal inference under  time-varying treatments, they have two  main
drawbacks: (a) they do not capture subject heterogeneity, and (b)
they
only consider fixed time intervals and do not scale gracefully with longer intervals.
This latter
limitation comes about because the number of parameters scales
linearly with the length of the time interval, and with a fixed number of
agents there is not enough data to estimate the parameters of the model.
For example, the effect of ridesharing discounts will vary
by different communities of riders, and may only be realized over a
long period of time. 
%

In this work, we propose a new form of MSM to address these 
drawbacks. We assume that the potential outcomes are generated from a
three-dimensional tensor of low rank, where the dimensions correspond
to the agents, time intervals, and set of possible histories. Intuitively, the rank
of the  tensor can be interpreted as a measure of the heterogeneity of the agents or the
time periods. For example, if the rank is $r$, then each agent can be described as some combination of $r$ 
underlying groups.  We assume the rank of the tensor is low, but
we allow the dimensions of the tensor to increase with the
number of agents and time periods. 

{\bf Contributions}: In order to estimate the outcome model, we set up
a weighted tensor completion problem, and show that the solution
converges to the true model. Compared to the traditional MSMs~\citet{Robins00}, we prove
convergence for two cases -- when the number of agents $N$ is fixed
and the length of the time interval $T$ increases and when $T$ is
fixed and $N$ increases. In particular, if the outcome at every time
period depends only on the  history of length $k$, then as
long as $k$ is bounded by logarithm of the increasing variable (be it
$N$ or $T$), our method guarantees convergence.  We solve the weighted
tensor completion in two steps. First, we convert it to a weighted
tensor approximation problem with an additive loss, where the loss
goes to zero as either $N$ or $T$ increases. Then we turn to solving
this weighted low-rank approximation problem, and provide conditions
under which we can approximately solve the estimation problem in
polynomial time. To the best of our knowledge, ours is the first
additive approximation algorithm for the noisy weighted tensor completion 
that runs in polynomial time under reasonable conditions. Finally, we propose an algorithm based on projected
gradient descent, which is easy to implement, and show that on a simulated dataset,
it performs better than the classical marginal structural models. Additionally, we also perform sensitivity analysis of our algorithm for various values of the assumed parameters.

\subsection{Related Work}


The fundamental problem of causal inference is that for 
each unit we  observe only one of  two possible outcomes-- either the
outcome corresponding to the  treatment or the outcome corresponding to the control, but not both. 
A standard approach
is to 
use the Neyman-Rubin potential outcomes framework~\cite{Rubin74}, where
for each unit and each intervention ($0$ or
$1$), there are two potential outcomes $Y_0$ and $Y_1$, and we only
observe one of these two outcomes. The traditional
focus has been on estimating  the {\em average
	treatment effect} (ATE), which measures the difference
in average outcomes under treatment than without treatment. 
%
However, with
ever-increasing data and  improvements in machine learning
algorithms, several recent papers have devised algorithms to discover
heterogeneous treatment effects. They often involve machine learning
techniques such as Bayesian nonparametrics \cite{Hill11}, random forests
\cite{WA18,AI16}, and deep learning \cite{SJS16,JSS16,YJS18}. Although we will be working
with the potential outcomes framework, there has also been siginificant effort in using
structural causal models as a framework for causality \cite{Pearl18},
including attention to heterogeneous effects \cite{SP12, Pearl17}.
Although there have been several attempts \cite{PJS13} to generalize these structural causal models 
for to consider multi-variate time-series data, 
 we are not aware of any work on combining these methods
with the kinds of temporal settings studied here. 

Epidemiologists and biostatisticians have 
considered the problem of estimating the causal effect of a policy
that
applies treatments over multiple time periods. \citet{Robins86} proposed the {\em marginal structural model} (MSM), as 
a way to measure the causal effect of a time-varying treatment in the presence of time-varying confounders.
Suppose, for example,
that a policy applies a binary treatment over  $T$ time periods.  MSM models each of the $2^T$ potential
outcomes through a parametric model with parameter $\beta$. \citet{Robins86} further showed that the solution to a maximum
weighted likelihood correctly estimates the quantity $\beta$. MSM has been adopted in various domains to estimate
the causal effect in a longitudinal study. Examples include  the effect of different drugs on the HIV patients \cite{RHB00},
the effect of loneliness on depression \cite{VHLT+11}, finance \cite{BS19}, and political science \cite{BG18}.

	\begin{figure*}[t!]
		\centering
		\begin{tikzpicture}
		\node[state] at (0,0) (l1) {$L_{i,1}$};
		\node[state] at (2,0) (a1) {$A_{i,1}$};
		\node[state] at (4,0) (y1) {$Y_{i,1}$};
		\node[state] at (6,0) (l2) {$L_{i,2}$};
		\node[state] at (8,0) (a2) {$A_{i,2}$};
		\node[state] at (10,0) (y2) {$Y_{i,2}$};
		\draw[->] (l1) edge (a1);
		\draw[->] (a1) edge (y1);
		\draw[->] (l1) edge[bend left] (y1);
		\draw[->] (y1) edge (l2);
		\draw[->] (a1) edge[bend right] (l2);
		\draw[->] (l1) edge[bend right] (l2);
		\draw[->] (a1) edge[bend left] (a2);
		\draw[->] (l1) edge[bend left] (a2);
		\draw[->](l2) edge (a2);
		\draw[->] (a2) edge (y2);
		\draw[->] (a1) edge[bend right] (y2);
		\draw[->] (l1) edge[bend left] (y2);
		\draw[->] (l2) edge[bend right] (y2);
		\draw[->] (y1) edge[bend right] (a2);
		\draw[decorate sep={1mm}{3mm},fill] (11,0) -- (12,0);
		\end{tikzpicture}
		\caption{A directed acyclic graph describing the model for individual $i$. Since $L_{i,t}$'s affect both treatments and outcomes, they are time-varying confounders.}
		\label{fig:causal_graph}
	\end{figure*}
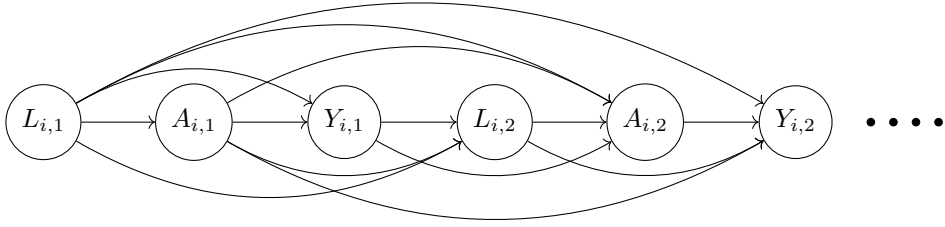

There have been very few attempts to generalize these models to capture important aspects such as 
heterogeneous effects, or large numbers of time-periods. Bayesian non-parametric methods have been used to estimate effect of time-varying interventions \cite{SSS17, XXS16}. They use gaussian process to model the progression of time-series, and can also estimate the effects of continuously varying treatments. However, these methods often make strong assumptions and  do not consider subject level heterogeneity and are often. Moreover, inference is often complicated with Bayesian methods, and the methods do not scale well with $N$ or $T$. On the other hand, \citet{LAS18} recently introduced recurrent marginal structural model, which is a recurrent neural network based architecture to forecast outcome in the future. Even though this model is an interesting generalization of classical MSM, it still considers a homogeneous MSM, and it needs a large number of policy evaluations to train the network.

\citet{NVMM+07} define a history-adjusted MSM,
which considers potential outcomes dependent on a short history instead of the full history of length $T$. 
Similar to~\citet{Robins86},
they propose an estimator based on maximum weighted likelihood, but
that fails to 
capture heterogeneous effects over the population.
The most closely related prior work is that of~\citet{ABDI+18}, who use matrix completion methods to estimate average treatment effects and other related causal quantities for the  time-varying treatment setting.
They model the potential outcomes using a matrix of low rank and provide an estimator. 
However, they do not consider the effect of past treatments on the outcomes. Rather,  
the potential outcome at each  time step depends only on
the current treatment. 
%
%
\citet{BAWM18} do consider time-varying treatments, but model treatment effect conditioned on a given history and 
under the same underlying policy.
Since they prefer not
to  directly model the environment,
their method cannot be used to estimate
the average treatment effect or other related quantities under a different policy. 


Finally, we use tensors to model the potential outcomes,
and in recent years, there have been several applications of tensor
methods in various machine learning problems~\cite{AFHKL12}. Our main optimization
problem is weighted tensor completion problem, which tries to estimate
the missing entries of a tensor from the observed entries. Tensor
completion is well-studies  \cite{BM16,YZ16,MS18}, but the problem of weighted tensor
completion is relatively unexplored. We convert the weighted tensor
completion problem into a weighted tensor approximation problem. 
This problem is intractable in general, but under suitable conditions,
\citet{SWZ19} recently developed an efficient algorithm.

\section{Model}

For $t=1,\ldots,T$, $A_{i,t}$ denotes the treatment assigned to subject $i$ at time $t$, and $L_{i,t}$ denotes the observed time-varying covariate at time $t$.  
For $t = 1,\ldots,T$, $Y_{i,t}$ denotes the observed outcome for unit $i$ at time $t$ and depends on the history of the treatments
assigned to agent $i$ at time $t$, and also on the sequence of time-varying covariates of agent $i$. We use the following notation for a sequence of
treatments. $A_{i,t':t''}$ denotes the sequence of treatments from $t'$ to $t''$ i.e. $A_{i,t'}, A_{i,t'+1},\ldots, A_{i,t''}$. A sequence of covariates, 
$L_{i,t':t''}$ is defined analogously. We will use lowercase variables to denote interventions of the random variables, e.g., $a_{i,t}$ denotes
a fixed assignment of $A_{i,t}$. 
The same notation applies to co-variates, and the outcomes.

The directed acyclic graph (figure~\ref{fig:causal_graph}) 
represents the relationship among different variables. For each $i$ and $t$, a {\em policy} determines $A_{i,t}$, i.e., the treatment assigned to individual $i$ at time $t$.
In general, such a policy can be dynamic, so that the
action $A_{i,t}$ depends  on the  history up to time $t$. In such a case, we will write
$\Pro{A_{i,t} | A_{i,1:t-1}, L_{i,1:t} }$ for the probability assigned to the treatment $A_{i,t}$ given  past treatment sequence of length $t-1$, $A_{i,1:t-1}$, and the realization of the past co-variate
sequence of length $t$, $L_{i,1:t}$.\footnote{We assume the policy is known i.e. the conditional probabilities of the treatment assignments
	are known. We leave the problem of estimating these probabilities from the data as  future work. 
} Note that, we omit the past outcomes $Y_{i,1:t-1}$ in determining the treatment at time $t$ as there is no direct edge from $Y_{i,t'}$ to $Y_{i,t}$ for $t' < t$. But this is without loss of generality, as $Y_{i,t'}$ can be included in $L_{i,t'+1}$.

The covariates $L_{i,t}$ are time-varying and can depend
the entire history up to time $t$. 
%
In full generality, the outcome at any time might also depend on the entire
treatment history, but we make the following assumption about the
outcome for any individual, say $i$.
\begin{assumption} \label{as:outcome}
	The outcome at time $t$, $Y_{i,t}$ depends only on the past treatment history of length $k$, $A_{i,t-k+1:t}$.
\end{assumption}
However,
the outcome can depend on the entire sequence of time-varying confounders at time $t$, $L_{i,1:t}$. Assumption \ref{as:outcome} helps us index the potential outcomes
at each time by length $k$ histories as $Y_{i,t}(a_{i,t-k+1:t}, \ell_{i,1:t})$. We are interested in the marginal outcomes
$\E{Y_{i,t}(a_{i,t-k+1:t})}$, where the effect of the time-varying covariates have been marignalized. This corresponds to the outcome $Y_{i,t}$ if we intervene on the nodes $A_{i,t-k+1:t}$, set them to the value $a_{i,t-k+1:t}$, remove edges incoming to $A_{i,t-k+1:t}$ and leave the rest of the graph in figure \ref{fig:causal_graph} unchanged.

%

\subsection{Outcome Model}

The marginalized potential outcomes  for an individual $i$ at time $t$, written as $\E{Y_{i,t}(a_{i,t-k+1:t})}$, are indexed by the past treatment history of length $k$. This implies that there are $N \times T \times 2^k$ potential outcomes out of which we observe realizations of  $N \times T$
potential outcomes.\footnote{In some scenarios,  potential outcomes can exhibit structure, e.g. if a subject's response
	at time $t$ depends only on how many times she was given the treatment in the last $k$ rounds. This implies, for each $i$ and $t$, there are only $k+1$ distinct potential
	outcomes. Our algorithm need
	not be aware of such a structure, and the results are stated without this requirement. Introducing this assumption would
	only lead to improved, positive results. }
We now introduce the outcome model. There is a tensor $\T^\star_{N,T}$ of dimension of $N \times T \times 2^k$, such that the outcome for subject 
$i$ at time $t$ for a $k$ length history $a_{i,t-k+1:t}$ is given as:
\begin{equation}\label{eq:outcome-model}
\E{Y_{i,t}(a_{i,t-k+1:t})} = \T^\star_{N,T}[i,t,a_{i,t-k+1:t}]
\end{equation}
Notice that we use $\T^\star_{N,T}$ to denote the true tensor, as opposed to a fixed $\T$. This is because the underlying model changes as either the number 
of agents $N$ or time periods $T$ increases.
 \Cref{eq:outcome-model} says that the marginal potential outcomes are indexed
by the subject $i$, time period $t$, and any possible treatment history of length $k$, $a_{i,t-k+1:t}$. The variable $k$ controls the dependence of the outcome on past sequence of treatments.
In general, $k$ can be arbitrarily long. However, we need to assume that  $k$ is bounded from above by the logarithm of the larger of $N$ and $T$ in order to estimate the potential outcomes.
Otherwise, the number of missing outcomes grows at a rate larger than the number of observed outcomes, and information-theoretically it is impossible to estimate all the missing outcomes.
\footnote{This is reasonable for the ride-sharing example as the number of trips taken by a rider will depend on his coupons for the past couple of months, but
	not on whether she received coupons several years back.}

\subsection{Sequentially Randomized Experiment}
Since we aim to estimate the marginal potential outcomes from observational data, certain identifying assumptions need to be satisfied. 
Below we state the required assumptions for time-varying treatments, which are generalization of standard assumptions in the literature
on causal inference.
%
 Let
$A_{i,t}^{\obs}$ denote the observed outcome, and $A_{i,t}(\cdot)$
denote the corresponding random variable dependent on the history. The
same notation holds for the outcomes and the covariates.  We define
the following properties:


1. {\em Consistency}: The observed data $\splitatcommas{ (L_{i,1},A_{i,1},Y_{i,1}, L_{i,2},A_{i,2},Y_{i,2},\ldots) }$ is equal to the potential outcomes as follows. For
every history $H_{i,t} = (L_{i,1:t},A_{i,1:t},A_{i,1:t-1})$, we have $Y_{i,t}^{\obs} = Y_{i,t}(H_{i,t}) = Y_{i,t}(A_{i,t-k+1:t}, L_{i,1:t})$, $L_{i,t+1}^{\obs} = L_{i,t+1}(H_{i,t})$,
and $A_{i,t+1}^{\obs} = A_{i,t+1}(A_{i,1:t}, L_{i,1:t})$.

2. {\em Sequential Ignorability}: For each $t$, the potential outcomes are independent of the treatment conditioned on the history at time $t$, i.e., 
$Y_{i,t} \CI A_{i,t} \mid A_{i,1:t}, L_{i,1:t}$.

3. {\em Positivity}: There exists a $\delta > 0$ such that for each $A_{i,1:t-1}, L_{i,1:t}, Y_{i,1:t-1}$, we have
\[
\delta < \Pro{A_{i,t} | A_{i,1:t-1}, L_{i,1:t}} < 1 - \delta
\]

Consistency maps the observed outcomes to the potential outcomes. In
particular, the outcome observed at time $t$, $Y_{i,t}^{\obs}$ is
completely determined by the past treatment history of length $k$, and the time-varying covariates.
If $A_{i,t}$ is chosen based on the history up to time $t$, then
sequential ignorability automatically holds~\cite{BAWM18}. On the
other hand, in an observational study, we must assume there are no
unmeasured confounders for sequential ignorability to hold.

\subsection{Quantities to Estimate}

The literature on causal inference has proposed various quantities to estimate in a setting with time-varying treatments. In the introduction, we talked about
the average treatment effect over the treated (ATET). For a fixed policy and any given assignment $\{A_{i,t}\}_{i,t}$ we define ATET to be the average effect of the treatment over the units that actually received the treatment
under $\{A_{i,t}\}_{i,t}$. Let $S_1 = \set{(i,t) : A_{i,t} = 1 }$ be the set of $(i,t)$ indices under treatment. Then,
\begin{align*}
\atet =\frac{1}{\abs{S_1} } \sum_{(i,t) \in S_1} \E{Y_{i,t}(A_{i,t-k+1:t})} - \E{Y_{i,t}(A_{i,t-k+1:t-1},0)}
\end{align*}
According to the outcome model  in~\eqref{eq:outcome-model}, this becomes
\begin{align}
\atet = \frac{1}{\abs{S_1} } \sum_{(i,t) \in S_1}\T^\star_{N,T}[i,t,A_{i,t-k+1:t}] - \T^\star_{N,T}[i,t,(A_{i,t-k+1:t-1},0)],\label{eq:tensor-atet}
\end{align}
and can be computed easily once we have an estimate of the tensor $\T^\star_{N,T}$. More general statistical estimands like the average treatment
effect of switching from one history $h_1$ to another history $h_2$ of length at most $k$, or the contemporaneous effect of treatment~\cite{BG18},
can be defined and estimated analogous to \cref{eq:tensor-atet}. We focus on estimating an average quatity e.g. ATET instead of the mean squared
error of estimating the underlying tensor as this metric is oblivious of the choice of the underlying model. 
However, we do perform sensitivity analysis with respect to the parameters like rank of the tensor ($r$) and dependence on the history ($k$).

\subsection{Marginal Structural Models}

Our work builds on the {\em marginal structural models}, proposed by~\citet{RHB00}.  At each time $t$, for every possible sequence of treatments $a_{i,1:t}$, MSMs 
define the following model
\begin{equation}\label{eq:link-func}
\E{Y_{it}(a_{i,1:t})} = g(a_{i,1:t}, \beta)
\end{equation}
Here $g$ is the link function, usually chosen to be either a linear function or a logistic function.
Since there are time-varying confounders, the standard maximum likelihood based estimator of $\beta$ will be biased. \citet{Robins00} showed that the parameter can be
estimated  in an unbiased way through an inverse probability of treatment weighting (IPTW) approach. 
Suppose the observed data is given as $\{A_{i,t}, L_{i,t}, Y_{i,t}\}_{i,t}$. Then consider the following weight for each subject $i$ and  time period $t$:
$$\sw_{it} = \prod_{s=1}^t \frac{\Pro{A_{i,s}|A_{i,1:s-1}}}{\Pro{A_{i,s}|A_{i,1:s-1}, L_{i,1:s}} }$$
The denominator of each term is the probability of the corresponding treatment given the history up to that point. The numerator of each term is the marginal
probability of the corresponding treatment conditioned only on the past sequence of treatments and is used to stabilize the weights. Now if we compute a 
maximum likelihood estimator where the observation of subject $i$ at time $t$ is weighted by $\sw_{it}$, then  $\beta$ can be identified. 

\section{Estimation}
The goal is to design an {\em unbiased} and {\em consistent} estimator $\hatT$ of the $N \times T \times 2^k$ tensor $\T^\star_{N,T}$. 
We will assume that the tensor $\T^\star_{N,T}$ has low rank. Tensor $\T^\star_{N,T}$ has rank $r$ if there exist vectors $\{u_\ell \}_{\ell=1}^r$, $\{v_\ell\}_{\ell=1}^r$ and $\{w_\ell\}_{\ell=1}^r$ ($u_\ell \in \R^N, v_\ell \in \R^T, w_\ell \in \R^{2^k}$) such that
$
\T^\star_{N,T} = \sum_{\ell=1}^r u_\ell \otimes v_\ell \otimes w_\ell
$
and $r$ is the smallest integer such that $\T^\star_{N,T}$ can be written in this form. Here $u_\ell \otimes v_\ell \otimes w_\ell$ denotes the outer-product of the three vectors $u_\ell, v_\ell$, and $w_\ell$ with entries
$u_\ell \otimes v_\ell \otimes w_\ell (a,b,c) = u_\ell(a) \times v_\ell(b) \times w_\ell(c)$. Without loss of generality, we can assume that the tensor $\T^\star_{N,T}$ is written in the following form, where each of the vectors
$u_\ell, v_\ell$, and $w_\ell$ are normalized.
\begin{equation}\label{eq:norm-rep-tensor}
\T = \sum_{\ell=1}^r \lambda_\ell u_\ell \otimes v_\ell \otimes w_\ell
\end{equation}
We use $\lambda_\ell(\T^\star_{N,T}) = \lambda_\ell$ to denote the $\ell$-th singular value of $\T^\star_{N,T}$.
For $p = 1,\ldots,2^k$, let $S_p$ be the set of observations  leading to the realization of history corresponding to the $p$-th slice i.e.
$S_p = \set{(i,t) : a_{i,t-k+1:t} = p}$.
Then we wish to solve the following optimization problem:
\begin{equation}\label{eq:wt-least-square}
\min_{\substack{\T \in \R^{N\times T \times 2^k},\\ \rank(\T) \leq r} } \frac{1}{NT} \sum_{p = 1}^{2^k} \sum_{(i,t) \in S_p} w_{i,t} \left(Y_{i,t} - \T(i,t,p)\right)^2
\end{equation}
The weights $w_{i,t}$ are defined as:
\begin{equation}
w_{i,t} = \prod_{s=t-k+1}^t \frac{\Pro{A_{i,s} | A_{i,t-k+1:s-1}}}{\Pro{A_{i,s} | A_{i,t-k+1:s-1}, L_{i,1:t}} }.
\end{equation}
For each term, the numerator denotes the marginal probability of the treatment given the history of treatments from time $t-k+1$ to that time.
The denominator denotes the probability of the treatment given the history from time $t-k+1$ to that time and the additional covariates
$L_{i,1:t-k}$. 
%
But, why are we interested in the optimization problem \cref{eq:wt-least-square}? The objective function 
is the weighted log-likelihood given tensor $\T$, and we prove next that if we could solve this problem exactly, the corresponding estimator will be consistent. We make some additional assumptions. 
\begin{enumerate}[label=\textbf{A.\arabic*}]
        \item \label{asn:bdd}{\em Bounded Singular Value} : For each $N$ and $T$, each of the $r$ singular values of $\T^\star_{N,T}$ are bounded, i.e. $\norm{\T^\star_{N,T}}_{\star} = \max_{i} \abs{\lambda_i(\T^\star_{N,T})} \leq L$
        for some $L$.
        \item \label{asn:decay}{\em Decaying Covariance} : There exists a constant $\gamma < 1$, such that for all $t' > t+ k$, and for all sequences of treatments $a_{i,t-k+1:t}$ and $\tilde{a}_{i,t'-k+1:t'}$ and covariates $\ell_{i,1:t}$, we have
        \begin{equation}\label{eq:decaying-covariance}
        1-\varepsilon \le \frac{\Pro{\tilde{a}_{i,t'-k+1:t'}   | 
        \ell_{i,1:t} , a_{i,t-k+1:t}} }{ \Pro{\tilde{a}_{i,t'-k+1:t'}} } \le 1 + \varepsilon
        \end{equation}
        for $\varepsilon = \bigo{(t'-t)^{1-\gamma}}$.
\end{enumerate}

The first assumption implies that each entry of the tensor is bounded between $-L$ and $L$. The second assumption implies that the treatments chosen at two time periods
that are far apart, are almost independent. This  imposes a restriction on the  policy that generates the treatment sequences and does not impose any restriction on the evolution of the covariates.

\subsection{Consistency}

For a given $N$ and $T$, we will write $\widehat{\T}_{N,T}$ to denote the solution to \cref{eq:wt-least-square}.
Consider the weighted log-likelihood function:
\begin{equation}\label{eq:log-likelihood}
L_{N,T}(\T_{N,T}) = \frac{1}{NT} \sum_{p = 1}^{2^k} \sum_{(i,t) \in S_p} w_{i,t} \left(Y_{i,t} - \T(i,t,p)\right)^2
\end{equation}

The estimate $\widehat{\T}_{N,T}$ minimizes $L_{N,T}(\T_{N,T})$ over all possible choices of $\T_{N,T}$. Our goal is to show that with high probability, $\norm{\widehat{\T}_{N,T} - \T^\star_{N,T}}_2/\sqrt{NT}$
converges to zero as $N$ increases. We normalize the difference in norm by both $N$ and $T$.
This is necessary, as with increasing $N$, the number of parameters we are estimating also grows.
\begin{theorem}\label{thm:exact-mle}
	Suppose $\T^\star_{N,T}$ exists for all $N$ and $T$ and fix any $\eps > 0$.
	\begin{itemize}
		\item Suppose $k = \bigo{\log_{1/\delta} N}$. Then we have $\Pro{ \norm{\hatT_{N,T} - \T^\star_{N,T}}_2 / \sqrt{NT} > \eps } \rightarrow 0$ as $N \rightarrow \infty$.
		\item Suppose \ref{asn:decay} holds, and $k =  \bigo{\log_{1/\delta} T} $, then $\Pro{ \norm{\hatT_{N,T} - \T^*_{N,T}}_2 / {\sqrt{NT}} > \eps } \rightarrow 0$ as $T \rightarrow \infty$.
	\end{itemize}
\end{theorem}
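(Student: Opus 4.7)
The plan is to follow the standard M-estimator recipe in three steps. First, I would show that the population objective $\mathbb{E}[L_{N,T}(\cdot)]$ is uniquely minimized at $\T^\star_{N,T}$, with the excess population risk controlling a quadratic discrepancy. Second, I would establish uniform concentration of the empirical loss $L_{N,T}$ around its expectation on the class of rank-$r$ tensors with singular values bounded by $L$. Third, I would combine the two via the optimality of $\hat{\T}_{N,T}$ to conclude that it lies close to $\T^\star_{N,T}$ in normalized Frobenius distance.

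For the identification step, I would combine the IPTW structure of the weights with consistency and sequential ignorability to show
\[
\mathbb{E}[L_{N,T}(\T)] - \mathbb{E}[L_{N,T}(\T^\star_{N,T})] \;=\; \frac{1}{NT}\sum_{i,t,p} q_{i,t,p}\bigl(\T(i,t,p) - \T^\star_{N,T}(i,t,p)\bigr)^2,
\]
where the coefficients $q_{i,t,p}$ are marginal probabilities (induced by the numerators of $w_{i,t}$) that the length-$k$ history at $(i,t)$ equals $p$. The cross term in the quadratic expansion vanishes because $\mathbb{E}[Y_{i,t}(p)] = \T^\star_{N,T}(i,t,p)$ by the outcome model~\eqref{eq:outcome-model}, while positivity lower-bounds each $q_{i,t,p}$ by $\delta^k$, so the right-hand side dominates $(\delta^k/NT)\,\norm{\T - \T^\star_{N,T}}_2^2$.

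For the concentration step, I would cover the constraint set $\{\T : \rank(\T)\le r,\ \norm{\T}_\star \le L\}$ by an $\eta$-net in Frobenius distance; standard arguments give log-size $\bigo{r(N+T+2^k)\log(1/\eta)}$. Each summand of $L_{N,T}(\T) - \mathbb{E}[L_{N,T}(\T)]$ has magnitude $\bigo{\delta^{-k}L^2}$ since $w_{i,t}\le\delta^{-k}$. In the first case ($T$ fixed, $N\to\infty$) the observations across subjects are i.i.d., and Bernstein's inequality gives a tail of $\exp(-cN\eps^2\delta^{2k}/L^4)$ per net point; the assumption $k = \bigo{\log_{1/\delta}N}$ with a sufficiently small constant ensures the net cardinality times $\delta^{-2k}$ is overcome by $N$, and a Lipschitz-in-$\T$ argument between net points converts pointwise concentration into uniform concentration. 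The second case ($N$ fixed, $T\to\infty$) is where I expect the main technical difficulty, since observations within a single subject's trajectory are dependent and the tool for handling this dependence is \ref{asn:decay}. I would chop each trajectory into blocks of length $\ell \gg k$; the decaying covariance~\eqref{eq:decaying-covariance} implies that the weighted summands in distinct blocks are nearly independent, with total-variation error decaying polynomially as $\ell^{1-\gamma}$. Bernstein (or a $\beta$-mixing concentration inequality) applied to the block averages as if they were independent then yields an analogous tail, losing only a polynomial-in-$T$ factor, and the hypothesis $k = \bigo{\log_{1/\delta}T}$ again ensures $\delta^{-2k}$ is dominated by the effective sample size.

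Combining these two ingredients with the optimality of $\hat{\T}_{N,T}$ gives
\[
\frac{\delta^k}{NT}\,\norm{\hat{\T}_{N,T} - \T^\star_{N,T}}_2^2 \;\le\; \mathbb{E}[L_{N,T}(\hat{\T}_{N,T})] - \mathbb{E}[L_{N,T}(\T^\star_{N,T})] \;\le\; 2\sup_{\T}\bigl|L_{N,T}(\T) - \mathbb{E}[L_{N,T}(\T)]\bigr| \;=\; o_P(1),
\]
and taking square roots yields the desired normalized-Frobenius consistency, provided the constants hidden in the $\bigo{\log_{1/\delta}\cdot}$ bound on $k$ are chosen small enough that the $\delta^{-k}$ factor inherited from positivity does not overwhelm the concentration rate. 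The hard part of the proof will be the second case: carefully setting the block length $\ell$ so that the dependency-induced bias is negligible while each block still has enough independent contributions to support Bernstein-style concentration, and then tracking the propagation of the $\delta^{-2k}$ factor and the covering-number factor through this blocking scheme.
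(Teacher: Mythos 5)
Your architecture (identification gap $+$ uniform concentration over the class $+$ basic inequality) is the textbook M-estimation recipe, and your identification step is essentially the paper's Lemma~\ref{lem:gap-likelihood} (the paper's weights come out as $\Pro{A_{i,t-k+1:t}}^2\ge\delta^{2k}$ rather than $\delta^k$, which is harmless structurally but makes the target for your concentration step even smaller). The genuine gap is the uniform concentration step, and it is not a matter of constants. The class $\set{\T:\rank(\T)\le r,\ \norm{\T}_\star\le L}$ contains all tensors $L\,u\otimes v\otimes w$ with unit factors, and the $u$-factor alone ranges over the sphere in $\R^N$; hence the log-covering number of the class at any fixed scale is at least $c\,N$ (and your own estimate is $\bigo{r(N+T+2^k)\log(1/\eta)}$). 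Your per-net-point tail is $\exp(-c\,N s^2\delta^{2k}/L^4)$ at deviation scale $s$, so the union bound over the net can only certify a uniform deviation $s$ satisfying $N s^2\delta^{2k}/L^4\gtrsim rN\log(1/\eta)$, i.e. $s\gtrsim L^2\sqrt{r\log(1/\eta)}\,\delta^{-k}$, a quantity that \emph{diverges} as $N\to\infty$ (since $k\to\infty$). But your final display, after dividing through by the identification constant $\delta^{k}$, needs $\sup_{\T}\abs{L_{N,T}(\T)-\E{L_{N,T}(\T)}}=o_P(\delta^{k}\eps^2)$, which tends to zero. The factor of $N$ in the exponent and the factor of $N$ in the entropy cancel exactly because the parameter dimension grows linearly in the sample size, so the requirement reduces to $\delta^{6k}\eps^4\gtrsim rL^4\log(1/\eta)$, which fails for every choice of the hidden constant once $N$ is large; the sentence claiming that a small enough constant in $k=\bigo{\log_{1/\delta}N}$ lets $N$ overcome the net cardinality is precisely where the argument breaks. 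The same obstruction recurs in your blocking scheme for the second bullet: the entropy is proportional to $T$ while the number of blocks is at most $T/\ell$.

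The paper's proof never needs (and never proves) uniform concentration, which is how it survives a parameter dimension proportional to the sample size. It uses convexity of $\T\mapsto L_{N,T}(\T)$ (Lemma~\ref{lem:concave-likelihood}) to show that a minimizer over a Frobenius ball $\B$ around $\T^\star_{N,T}$ is automatically a global minimizer, and then rules out minimizers outside $\B$ by a chain of inequalities that invokes only \emph{pointwise} Chebyshev-type concentration (Lemma~\ref{lem:variance-bound}) at two tensors together with the population gap of Lemma~\ref{lem:gap-likelihood}; serial dependence enters only through the covariance-decay bound (Lemma~\ref{lem:covariance-bound}) inside that variance calculation, which is also why the paper gets polynomial rather than exponential tails. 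If you want to keep your ERM route, you must exploit the algebraic structure of the loss rather than raw entropy: expand the square, write the fluctuations as linear functionals of noise tensors, and bound $\sup_{\T}\abs{\langle G,\T\rangle}\le rL\sup_{\norm{u}=\norm{v}=\norm{w}=1}\langle G,u\otimes v\otimes w\rangle$ (and similarly for the quadratic term, whose entrywise square has rank at most $r^2$). The latter supremum runs over rank-one \emph{unit} tensors, whose per-point variance is $\bigo{\delta^{-2k}/(NT)^2}$ rather than $\bigo{\delta^{-2k}rL^2/(NT)}$, so a net over the three factor spheres is affordable and yields a uniform deviation of order $\delta^{-k}\poly(r,L)/\sqrt{N}$ — small enough relative to $\delta^{2k}\eps^2$ when the constant in $k$ is below $1/4$. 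Either repair changes the proof substantially; as written, the concentration step fails.
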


The full proof is given in section~\ref{sec:proof_exact_mle} in the appendix. Here we sketch the main challenges.
The proof follows the ideas presented in~\citet{NM94}, but there are some subtle differences.  
First the
parameter space $\Theta_{N,T} = \set{\T \in \R^{N \times T \times 2^k} : \rank(\T) \leq r}$ need not be a closed set, as we can have a sequence
of rank $r$ tensors converging to a rank $r+1$ tensor \cite{Bini86}. However, the covexity of the log-likelihood function in $\T$ helps us to
circumvent this problem. Second, the standard way to prove the consistency of the maximum likelihood estimation is to consider a neighborhood
around the true parameter, say $\B$. Then there will be a gap of $\eps$ between the maximum over $\B$ and the maximum outside of $\B$,
and for large number of samples the gap between the objective value of the true parameter and the estimate will be less than $\eps$, and the
estimate will be inside the neighborhood $\B$. However, in our case, the gap $\eps$ is also changing with $N$ as the entire parameter
space is changing, and it might be possible that this gap goes to zero with increasing $N$. However, we can provide a lower bound 
on the gap in terms of the radius of the neighborhood and other parameters of the problem, and this helps to complete the proof.

\subsection{Solving Tensor Completion}
In this section, we focus on solving the weighted tensor completion to estimate the underlying tensor $\T^*_{N,T}$. First, we convert the weighted tensor completion problem to
a weighted tensor approximation problem with an additive error that goes zero as the number of units $N$  increases to infinity. 
This has two benefits. We can provide  a $(1+\eps)$-approximation to the weighted tensor approximation problem under reasonable assumptions on the  policy generating the treatment assignment. However, this algorithm is quite hard to implement to practice. So, we propose a gradient descent based algorithm for
the weighted tensor approximation problem. Compared to the original tensor completion problem, the gradients of the parameters are non-negative for
the unobserved entries of the tensor, and help the algorithm to converge faster. We need two definitions. 
Let us define the following tensor:
\begin{align*}
Y_{w}(i,t,p) = \left\{ \begin{array}{cc}
\frac{w_{i,t}Y_{i,t} }{\Pro{(i,t) \in S_p}} & \text{ if } (i,t) \in S_p  \\
0 & \text{ otherwise } 
\end{array}
\right.
\end{align*}
and the ``weight'' tensor, $W(i,t,p) = \sqrt{\Pro{(i,t) \in S_p} }$. This leads  to a   tensor approximation problem:
%
\begin{align}
\min_{\substack{\T \in \R^{N \times T \times 2^k}, \\ \rank(\T) \leq r, \norm{\T}_{\star} \leq L} } \frac{1}{NT} \norm{Y_w - \T}^2_W. \label{eq:new-objective}
\end{align}
Here $\norm{\T}^2_W$ denote the weighted Euclidean norm, i.e. $\norm{\T}^2_W = \sum_{i,j,k} W^2(i,j,k)\T^2(i,j,k)$.
Objective~\eqref{eq:new-objective} computes a weighted low rank approximation of $Y_{w}$. Let $\checkT_{N,T}$ be the solution
to~\eqref{eq:new-objective}. The next theorem show that $\checkT_{N,T}$ is also a consistent estimator.

\begin{theorem}\label{thm:approx-mle}
	Suppose $T^*_{N,T}$ exists for all $N$ and $T$. 
	\begin{itemize}
	\item If $k \leq \bigo{\log_{1/\delta} N}$, then for any $\eps > 0$, $\Pro{\norm{\checkT_{N,T} - \T^*_{N,T}}_2 / \sqrt{NT}  > \eps }  \rightarrow 0$ as $N \rightarrow \infty$.
	\item If $k \leq \bigo{\log_{1/\delta} T}$ and \ref{asn:decay} holds, then  $\forall \ \eps > 0$, $\Pro{\norm{\checkT_{N,T} - \T^*_{N,T}}_2 / \sqrt{NT}  > \eps }  \rightarrow 0$ as $T \rightarrow \infty$.
	\end{itemize}
\end{theorem}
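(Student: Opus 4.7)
The plan is to reduce Theorem~\ref{thm:approx-mle} to Theorem~\ref{thm:exact-mle} by showing that the approximation objective $\tilde L_{N,T}(\T) := \frac{1}{NT}\norm{Y_w - \T}_W^2$ and the completion objective $L_{N,T}(\T)$ differ, as functions of $\T$, only by a quantity independent of $\T$ plus a centered random process that vanishes uniformly over the constrained parameter set $\Theta_{N,T}(r,L) = \{\T : \rank(\T) \leq r, \norm{\T}_\star \leq L\}$. Once such a uniform closeness is in hand, $\checkT_{N,T}$ is automatically an approximate minimizer of $L_{N,T}$ over $\Theta_{N,T}(r,L)$, so the consistency of $\hatT_{N,T}$ from Theorem~\ref{thm:exact-mle} transfers directly.

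First, I would expand $\norm{Y_w - \T}_W^2$ entry by entry. Writing $q_p(i,t) = \Pro{(i,t) \in S_p}$ and letting $p_{i,t}$ denote the observed history index, so that $(i,t) \in S_{p_{i,t}}$ and only the term $p = p_{i,t}$ contributes to $Y_w$, collecting terms and subtracting $L_{N,T}(\T)$ produces an identity
\begin{equation*}
\tilde L_{N,T}(\T) = L_{N,T}(\T) + R_{N,T} + D_{N,T}(\T),
\end{equation*}
where $R_{N,T}$ is $\T$-independent (it gathers the $w_{i,t}Y_{i,t}^2$ and $w_{i,t}^2 Y_{i,t}^2/q_{p_{i,t}}(i,t)$ terms) and
\begin{equation*}
D_{N,T}(\T) = \frac{1}{NT}\sum_{i,t}\Bigl(\sum_p q_p(i,t)\,\T(i,t,p)^2 - w_{i,t}\,\T(i,t,p_{i,t})^2\Bigr).
\end{equation*}
Using the standard IPW identity $\E[w_{i,t}\,f(A_{i,t-k+1:t}) \mid L_{i,1:t}] = \sum_p q_p(i,t)\, f(p)$, which holds under consistency, sequential ignorability and positivity, I would verify $\E[D_{N,T}(\T)] = 0$ for every fixed $\T$, so $D_{N,T}$ is a centered random process on $\Theta_{N,T}(r,L)$.

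Next, I would establish $\sup_{\T \in \Theta_{N,T}(r,L)} |D_{N,T}(\T)| \to 0$ in probability. Boundedness of the entries of $\T$ (following from $\norm{\T}_\star \leq L$ and $\rank(\T) \leq r$) and the positivity bound $w_{i,t} \leq \delta^{-k}$ yield a Lipschitz constant for $D_{N,T}$ polynomial in $L$, $r$, and $\delta^{-k}$. An $\eps$-net on the rank-$r$, spectrally bounded set has log-cardinality $\bigolog{r(N+T+2^k)}$; combining a pointwise Hoeffding-type bound---independent across $i$ in the fixed-$T$ regime, or via a block decomposition under Assumption~\ref{asn:decay} in the fixed-$N$ regime---with a union bound over the net gives the uniform guarantee, provided $k = \bigo{\log_{1/\delta}\max(N,T)}$ so that the factors $2^k$ and $\delta^{-k}$ do not swamp the concentration rate. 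The same argument, restricted to the $\T$-independent summands, shows $R_{N,T}$ is tightly concentrated and behaves as an essentially deterministic offset.

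Substituting the identity back, any minimizer $\checkT_{N,T}$ of $\tilde L_{N,T}$ over $\Theta_{N,T}(r,L)$ satisfies $L_{N,T}(\checkT_{N,T}) \leq L_{N,T}(\T^*_{N,T}) + o_P(1)$, and the separation argument used in the proof of Theorem~\ref{thm:exact-mle} then forces $\norm{\checkT_{N,T} - \T^*_{N,T}}_2/\sqrt{NT} \to 0$ in probability in both regimes. The hardest step I anticipate is the uniform concentration of $D_{N,T}$ in the $T \to \infty$ regime with $N$ fixed: temporal dependence rules out a direct iid Hoeffding bound, so the mixing bound~\eqref{eq:decaying-covariance} must be invoked carefully, most likely by partitioning $[1,T]$ into blocks of length $\bigom{\log T}$ so that inter-block dependencies decay fast enough while intra-block contributions remain controllable.
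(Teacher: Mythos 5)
Your high-level route is the same as the paper's: your exact decomposition $\tilde{L}_{N,T} = L_{N,T} + R_{N,T} + D_{N,T}(\T)$, with cross terms cancelling identically and centering coming from the IPW identity $\E{w_{i,t}f(A_{i,t-k+1:t})} = \sum_p q_p(i,t) f(p)$, is precisely the computation inside the paper's Lemma~\ref{lem:approx-optim} (whose centering facts are Lemmas~\ref{lem:hoeffding} and~\ref{lem:chebyshev-time}), and your final step --- feeding ``$\checkT_{N,T}$ is an $o_P(1)$-approximate minimizer of $L_{N,T}$'' into the convexity-plus-separation argument of Theorem~\ref{thm:exact-mle} --- is exactly how the paper's proof concludes.

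The gap is your step 2, the uniform law $\sup_{\T}\abs{D_{N,T}(\T)}\to 0$ over the constrained class, proved by an $\eps$-net and a union bound; this step is not just unproven but cannot work in the form you need. Quantitatively, any net over rank-$r$ tensors must cover the agent factors $u_1,\ldots,u_r\in\R^N$, so its log-cardinality is $\Omega(rN)$, while the Hoeffding exponent available per point is $\bigo{N\eps^2 L^{-4}\left(\tfrac{\delta}{1-\delta}\right)^{2k}}$, which is $o(N)$ once $k = c\log_{1/\delta}N$ grows; the union bound is off by a polynomial factor in $N$ and cannot be closed. More fundamentally, under the entrywise constraint $\norm{\T}_{\infty}\le L$ (the constraint the paper actually imposes on $\Theta_{N,T}$ in its appendix proofs), the uniform statement is \emph{false} in the fixed-$T$ regime: each row carries its own free factor coordinates, so the supremum may adapt row by row to the data. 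Concretely, take the rank-one tensor $\T = c\otimes\one\otimes\one$ with $c_i = L\cdot\one[\sum_t(1-w_{i,t})>0]$; since $\E{w_{i,t}}=1$ and the rows are independent,
\begin{equation*}
D_{N,T}(\T) \;=\; \frac{L^2}{NT}\sum_{i=1}^N \max\Bigl(\sum_{t=1}^T(1-w_{i,t}),\,0\Bigr) \;\longrightarrow\; \frac{L^2}{T}\,\E{\max\Bigl(\textstyle\sum_{t=1}^T(1-w_{1,t}),\,0\Bigr)},
\end{equation*}
which is strictly positive whenever the covariates genuinely confound (so that the weights are non-degenerate); no covering or chaining argument can prove a statement that fails. (Under the alternative normalization $\norm{\T}_{\star}\le L$, which you invoke to bound entries, your supremum claim does hold --- but trivially and deterministically, since then $\abs{D_{N,T}(\T)}\le r^2L^2\bigl(1+(\tfrac{1-\delta}{\delta})^k\bigr)/(NT)$, and in that regime the theorem itself is vacuous because $\norm{\checkT_{N,T}-\T^\star_{N,T}}_2\le 2rL = o(\sqrt{NT})$.) The paper never asserts uniformity: its Lemma~\ref{lem:approx-optim} invokes concentration only at the two specific tensors $\checkT_{N,T}$ and $\hatT_{N,T}$ --- itself a rigor gap, since those are data-dependent, but not a provably false claim --- so any genuine repair must exploit the structure of those particular minimizers rather than a uniform law over the whole class.
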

The proof works by first showing that converting weighted tensor completion (\ref{eq:wt-least-square}) to weighted tensor approximation (\ref{eq:new-objective}) introduces
an error which goes to zero as $N$ increases to infinity. Therefore, $\checkT_{N,T}$ is an approximate minimizer of original problem~\ref{eq:wt-least-square}.
Then we can modify the proof of theorem \ref{thm:exact-mle} to show that such an approximate minimizer is also consistent. The full proof is given in section \ref{sec:proof_approx_mle} in the appendix.


\begin{figure*}[t!]
\centering
\begin{subfigure}[b]{0.48\textwidth}
\centering
\begin{tikzpicture}[scale=0.85]
	\node[state] at (0,0) (l1) {$L_{i,1}$};
	\node[state] at (2,0) (l2) {$L_{i,2}$};
	\node[state] at (4,0) (l3) {$L_{i,3}$};
	\node[state] at (1,1.5) (a1) {$A_{i,1}$};
	\node[state] at (3,1.5) (a2) {$A_{i,2}$};
	\node[state] at (5,1.5) (a3) {$A_{i,3}$};
	\draw[->] (l1) edge (a1);
	\draw[->] (l2) edge (a2);
	\draw[->] (l3) edge (a3);
	\draw[->] (a1) edge (l2);
	\draw[->] (a2) edge (l3);
	\draw[->] (a1) edge (a2);
	\draw[->] (a2) edge (a3);
    \draw[decorate sep={1mm}{3mm},fill] (6,1) -- (6.7,1);
\end{tikzpicture}
\caption{Simple Policy: Treatment $A_{i,t}$ depends on the current covariate $L_{i,t}$ and the immediate past treatment $A_{i,t-1}$.}
\end{subfigure}
~
\begin{subfigure}[b]{0.48\textwidth}
\centering
\begin{tikzpicture}[scale=0.85]
\node[state] at (0,0) (l1) {$L_{i,1}$};
	\node[state] at (2,0) (l2) {$L_{i,2}$};
	\node[state] at (4,0) (l3) {$L_{i,3}$};
	\node[state] at (1,1.5) (a1) {$A_{i,1}$};
	\node[state] at (3,1.5) (a2) {$A_{i,2}$};
	\node[state] at (5,1.5) (a3) {$A_{i,3}$};
	\draw[->] (l1) edge (a1);
	\draw[->] (l2) edge (a2);
	\draw[->] (l3) edge (a3);
	\draw[->] (a1) edge (l2);
	\draw[->] (a2) edge (l3);
	\draw[->] (a1) edge (a2);
	\draw[->] (a2) edge (a3);
	\draw[->] (l1) edge (a2);
	\draw[->] (l1) edge[bend left=85] (a3);
	\draw[->] (l2) edge (a3);
	\draw[->] (a1) edge[bend left] (a3);
	\draw[decorate sep={1mm}{3mm},fill] (6,1) -- (6.7,1);
\end{tikzpicture}
\caption{Complex Policy: Treatment $A_{i,t}$ depends on the covariates $\{L_{i,t'}\}_{t'=t-2}^{t}$ and treatments $\{A_{i,t'}\}_{t'=t-3}^{t-1}$.}
\end{subfigure}
\caption{\label{fig:policies}Different policies considered for  Simulation}
\end{figure*}
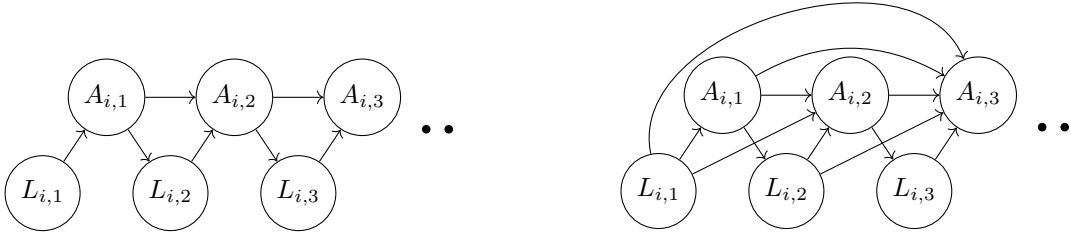

\subsubsection{A $(1+\varepsilon)$-approximation algorithm}

\citet{SWZ19} show that there is an algorithm that takes as input a tensor $A \in \R^{n \times n \times n}$, a weight tensor $W \in \R^{n \times n \times n}$, and outputs a tensor $A'$ of rank $r$ such that
$\norm{A - A'}^2_W \leq (1 + \eps) \min_{\rank(B) \leq r} \norm{A - B}^2_W.$ 
The authors consider the case when the weight tensor $W$ has $s$ distinct faces in  two dimensions (e.g. $s$ distinct rows, and columns). Then their algorithm runs in time $\nnz(A) + \nnz(W) + n2^{\tilde{O}\left(s^2r^2/\eps \right) }$ time, where $\nnz(A)$ is the
number of nonzero entries in $A$. 
%
%
%
We want to find a rank $r$ approximation of tensor $Y_{w} \in \R^{N \times T \times 2^k}$.
The main challenge in applying the algorithm proposed by \citet{SWZ19} is that we want to ensure that 
the singular values are bounded between $-L$ and $L$. This can be handled by introducing $r$ additional constraints in the polynomial
system verifier of the algorithm in \cite{SWZ19}. We provide the full algorithm and an anlysis of its running time  in section~\ref{sec:mult_approx} in the appendix.

\subsubsection{Projected Gradient Descent}
We now provide a simple algorithm for the weighted tensor approximation problem~\eqref{eq:new-objective} based on  projected gradient descent.
Algorithm ~\ref{alg:proj-grad-descent} 
repeatedly applies two steps. Line 5 computes a gradient step to compute the new tensor $\T_u$. 
However, the tensor $\T_u$ might not be of rank $r$, so line 6 computes a projection of tensor $\T_u$ into the space of tensors of rank $r$.
As the projection step is a standard rank $r$ approximation of a tensor, we use the {\em parafac} method from the TensorLy package \cite{KPAP18}
for this step.
\begin{algorithm}
	\caption{Weighted Tensor Approximation}
	\label{alg:proj-grad-descent}
	\begin{algorithmic}[1]
		\STATE {\bfseries Input:} Tensor $\bS \in \R^{N \times T \times 2^k}$, weight tensor $W \in \R^{N \times T \times 2^k}$, rank $r$, and $R$.
		\STATE Initialize $\T$.
		\FOR{$j=1$ to $R$}
		\STATE $\T_u \leftarrow \T + \lambda 2 W^2 (\bS - \T)$
		\STATE $\T \leftarrow \text{Project}(\T_u,r)$
		\IF{Relative Change in Loss $\le \eps$} 
		\RETURN $\T$
		\ENDIF
		\ENDFOR
		\RETURN $\T$
	\end{algorithmic}
\end{algorithm}

\section{Simulation}
\begin{figure*}[t!]
\centering
\begin{subfigure}[b]{0.48\textwidth}
\centering
\includegraphics[scale=0.37]{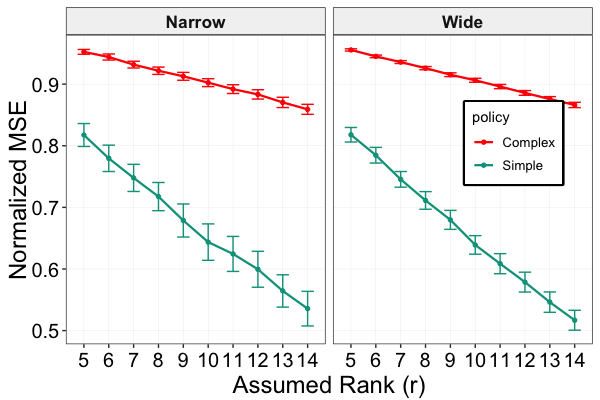}
\end{subfigure}
~
\begin{subfigure}[b]{0.48\textwidth}
\centering
\includegraphics[scale=0.37]{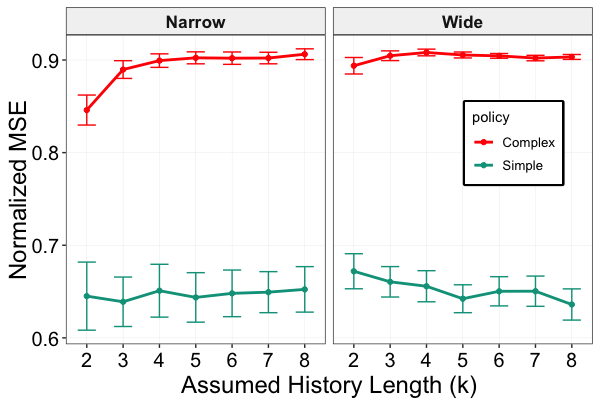}
\end{subfigure}
\caption{Sensitivity of algorithm \ref{alg:proj-grad-descent} to assumed rank ($r$) of the underlying tensor, and the history length ($k$). The true outcome model is 
generated with tensor of rank $r=10$, and temporal dimension $k=5$. The treatment is generated according to two different policies (simple and complex). Error bars show standard
errors from repeating each simulation 100 times. (a) As the parameter $r$ increases,  normalized MSE drops significantly, implying a better fit with tensors of rank $> 10$. (b) However, the error seems to be robust to changes in  $k$. \label{fig:combined}}
\end{figure*}

We now evaluate the effectiveness of Algorithm \ref{alg:proj-grad-descent} through a simulation.\footnote{The code for the simulation is available at \url{https://github.com/debmandal/Tensorized_MSM}} We consider the simulation setup introuduced by \cite{IR15} and consider two types of worlds --  narrow and wide. The narrow world has more agents compared to the number of time steps ($N=500$ and $T=10$), whereas the wide world has fewer agents compared to the number of time steps ($N=10$ and $T=500$). We consider these two worlds to see how our algorithm performs when either the number of agents $N$ or the number of time-steps $T$ is large compared to the other parameter.

We generated the data i.e. the treatment assignment $\{A_{i,t}\}_{i,t}$ and the outcome $\{Y_{i,t}\}_{i,t}$ according to  two policies. Both of them are adapted from \cite{IR15}, however \citet{IR15} considered only three time-steps, whereas we generalize the treatment policy for an arbitrary number time-steps. We provide full details of the policies for completeness, and also to highlight the differences with~\cite{IR15}. Figure~\ref{fig:policies} shows the causal models underlying the two policies. 
\begin{enumerate}
	\item {\bf Simple}: The treatment at time period $t$, $A_{i,t}$ depends on the current covariate $L_{i,t}$ and the immediate past treatment $A_{i,t-1}$. Specifically, we write the covariates as $L_{i,t} = (Z_{it1}\cdot U_{it}, Z_{it2} \cdot U_{it}, \abs{Z_{it3} \cdot U_{it}}, \abs{Z_{it4} \cdot U_{it}})^\top$. Here $Z_{itk}$ is an iid draw from the standard normal distribution, $U_{i1} = 1$, and $U_{it} = 2 + (2A_{i,t-1} - 1) / 3$ for $t \geq 2$. The treatments are generated as $P(A_{i,t} = 1) = \textrm{expit}\left\{-A_{i,t-1} + \gamma^\top L_{i,t} + (-1/2)^t \right\}$, where $\gamma = (1, -0.5, 0.25, 0.1)^\top$. Additionally, we set $A_{i,0} = 0$ for generating the treatment at time $t=1$.
	\item {\bf Complex}: The treatment at time period $t$, $A_{i,t}$ depends on the covariate sequence $\{L_{i,t'}\}_{t'=t-2}^t$ of length three, and past treatment sequence $\{A_{i,t'}\}_{t'=t-3}^{t-1}$ of length three. Like the simple policy, we write covariate $L_{i,t}$ as $L_{i,t} = (Z_{it1}\cdot U_{it}, Z_{it2} \cdot U_{it}, \abs{Z_{it3} \cdot U_{it}}, \abs{Z_{it4} \cdot U_{it}})^\top$, where $Z_{itk}$ is an iid draw from the standard normal distribution. However the definition of $U_{it}$-s are changed as $U_{i1} = 1$ and $U_{it} = \prod_{t'=t-3}^{t-1}\left\{2 + (2A_{i,t'} - 3)\right\}$ for $t \ge 2$. The treatments are generated as $P(A_{i,t} = 1) = \textrm{expit}\left\{\sum_{t'=t-2}^{t} A_{i,t'-1} + \gamma^\top L_{i,t'} + (-1/2)^t \right\}$, where $\gamma = (1, -0.5, 0.25, 0.1)^\top$. Additionally, we set $A_{i,0} = A_{i,-1} = A_{i,-2} = 0$ for generating the treatment at time $t=1$.
\end{enumerate}

\noindent \textbf{Outcome Model}: In order to generate the outcome variables $\{Y_{i,t}\}_{i,t}$ we first fix a tensor $\T \in \R^{N \times T \times 2^k}$. The tensor is used to introduce the desired heterogeneity in the potential outcomes, and is chosen as follows. 
Fix rank $r=10$, and choose the vectors  $\{u_i\}_{i=1}^r$, $\{v_i\}_{i=1}^r$ and $\{w_i\}_{i=1}^r$ 
by selecting each entry uniformly at random
from the interval $[0,1]$ and then normalizing the vectors. Second, we select the singular values $\{\lambda_i\}_{i=1}^r$ uniformly at random from the interval $[50,200]$. This gives us a tensor $\T = \sum_{i=1}^r \lambda_i u_i \otimes v_i \otimes w_i$. Having fixed the tensor, we generate the outcome $Y_{i,t}$ at time $t$ as
\begin{align}
Y_{i,t} =  250 - 10 \sum_{t'=t-2}^t A_{i,t'} + \sum_{t'=t-2}^t \delta^\top L_{i,t'}  + \T(i,t, A_{i,t-k+1:t}) + \nu_{i,t} \label{eq:outcome-model-simulation}
\end{align}
where $\delta = (1, -0.5, 0.25, 0.1)^\top$ and $\nu_{i,t}$ is an iid draw from the standard normal distribution. We introduce the extra term $\T(i,t, A_{i,t-k+1:t})$ to the original outcome model considered in \citet{IR15}. Also recall that the largest singular value of $\T$ is at most $200$, so that the new term does not dominate the rest of the outcome model.

Since algorithm ~\ref{alg:proj-grad-descent} needs to know the parameters $r$ and $k$, we first observe how sensitive it  is to the choice of the assumed rank parameter $r$ and the assumed length of the history $k$. Figure ~\ref{fig:combined} plots the normalized  mean squared error (MSE) for various choices of $r$ and $k$ values for the two types of policies. As rank $r$ increases, the error goes down significantly. This implies that even though a tensor of rank $10$ is used in the outcome model, a tensor of higher rank might be a better fit for the marginal outcomes. On the other hand, the error seems to be quite robust to changes in the parameter $k$. However, we believe that one should see a drop in the error with higher $k$ if the outcome model is more heterogeneous e.g. the tensor $\T$ is more dominant in the outcome model ~\eqref{eq:outcome-model-simulation}.

Finally, we fit traditional MSM \cite{Robins00} at every time-step $t$. Since, conditioned on the covariates and treatments, the outcome is distributed according to a normal distribution
, we use a linear function as the link function, i.e., $g(a_{i,1:t},\beta^t) = \left \langle a_{i,1:t}, \beta^t\right \rangle$ in eq. \ref{eq:link-func}. Then we solve a weighted least squares regression problem to obtain the parameters $\{\beta^t\}_{t \in [T]}$. For the narrow world ($N=500$ and $T=10$), MSM performs reasonably well, and the normalized MSE turns out to be {\bf 8.69} (resp. {\bf 8.31}) for the simple (resp. complex) policy.  Algorithm \ref{alg:proj-grad-descent}, on the other hand,
gives much better performance and has normalized MSE of {\bf 0.64} (resp. {\bf 0.90}) for the simple (resp. complex) policy with parameters $r=10$ and $k=5$. We also evaluated classical MSM on the wide world ($N=10$ and $T=500$), but it performs poorly and has normalized MSE greater than $10^6$. This highlights a main drawback of the MSM -- traditional methods don't perform well if the number of time-steps $T$ is large compared to $N$.

\section{Conclusion and Future Work}
In this work, we proposed a new type of marginal structural models based on tensors, and showed how to estimate the parameters of the model. There are many interesting directions for future work. We assumed perfect knowledge of the policies in order to estimate the weights. So, it would be interesting to learn the weights from data and develop a 
doubly robust estimator~\cite{BR05}, which works if either the outcome or the treatment model is mis-specified. Furthermore, an interesting direction is to consider the presence of unobserved confounders along the lines of \citet{BAS19}, who developed a deconfounder for time-varying treatments. Finally, it will be interesting to see if we can generalize the results of \cite{CRY19} and theoretically analyze the performance of algorithm~\ref{alg:proj-grad-descent}. 

\bibliography{references}
\bibliographystyle{plainnat}

\appendix
\section{Proof of Theorem \ref{thm:exact-mle}\label{sec:proof_exact_mle}}
\begin{proof}
Let $S(i,t)$ be the random variable which denotes the length $k$-history of user $i$ at time $t$. Then, the weighted log-likelihood function with respect to a tensor $\T_{N,T}$ is given as :
	$$L_{N,T}(\T_{N,T}) = \frac{1}{NT} \sum_{i=1}^N \sum_{t=1}^T w_{i,t} (Y_{i,t} - \T_{N,T}(i,t,S(i,t)))^2.$$
	First we compute the expected value of the weighted log-likelihood with respect to the policy $\mathcal{P}$ (i.e. the random
	variables $\{Y_{i,1:T}\}_{i=1}^N, \{A_{i,1:T}\}_{i=1}^N, \{L_{i,1:T}\}_{i=1}^N$ and the true underlying tensor $\T^\star_{N,T}$.
	We write $\ell^*_{N,T}(\T_{N,T})$ to denote this quantity as it only depends on the tensor $\T_{N,T}$, i.e. $\ell^*_{N,T}(\T_{N,T}) = \Em{\mathcal{P},\T^\star_{N,T}}{L_{N,T}(\T_{N,T})}$. 

 We want to show that $\norm{\hatT_{N,T} - \T^\star_{N,T}}$ becomes small as either $N$ or $T$ increases. Our proof is based on the proof of the consistency
	of the maximum likelihood given in \cite{NM94}. 
	We write $\Theta_{N,T}$ to denote the parameter space $\set{\T \in \R^{T \times N \times 2^k} : \rank(\T) \leq r, \norm{\T}_{\infty} \leq L}$. $\Theta_{N,T}$ is bounded but need not be closed because
	of issues with
	border tensor. It is known that there might exist a sequence of rank $r$ tensors whose limit is a rank $r+1$
	tensor \cite{Bini86}. However, we can exploit the convexity of $L_{N,T}(\cdot)$ to overcome this problem.

	First consider a neighborhood $\B$ of radius $d$ centered at $\T^\star_{N,T}$ and contained within the interior
	of $\Theta_{N,T}$. 
	$$\B = \set{\T \in \R^{N \times T \times B} : \norm{\T - \T^\star_{N,T}}_2/\sqrt{NT} \leq d}$$
	Lemma \ref{lem:concave-likelihood} proves that $L_{N,T}(\cdot)$ is convex
	over $\Theta_{N,T}$. Since a convex function is continuous over the interior of its domain, $L_{N,T}(\cdot)$ is continuous over $\B$.  Moreove, unlike $\Theta_{N,T}$, set $\B$ is a compact set. This implies that there exists 
	a minimizer for $L_{N,T}(\cdot)$ over $\B$. Suppose $\tildeT_{N,T}$ be the minimizer of 
	$L_{N,T}(\cdot)$ over $\B$. Consider any $\T \in \Theta_{N,T} \setminus \B$. Then there exists $\lambda < 1$ such that
	$\T' = \lambda \tildeT_{N,T}+ (1-\lambda) \T$ and $\T' \in \B$. This gives us the following :
	\begin{align*}
	&L_{N,T}(\tildeT_{N,T}) \leq L_{N,T}(\T') = L_{N,T}(\lambda \tildeT_{N,T}+ (1-\lambda) \T) \leq \lambda L_{N,T}(\tildeT_{N,T}) + (1-\lambda) L_{N,T}(\T) \\
	&\Rightarrow L_{N,T}(\tildeT_{N,T}) \leq  L_{N,T}(\T)
	\end{align*}
	This first line uses the convexity of $L_{N,T}(\cdot)$ (lemma \ref{lem:concave-likelihood}).
	This proves that $\tildeT_{N,T}$ is actually the minimizer of $L_{N,T}(\cdot)$ over the entire parameter space $\Theta_{N,T}$.
	Moreover, any other minimizer of $L_{N,T}(\cdot)$ must be inside $\B$. Otherwise, suppose $\T''$ minimizes $L_{N,T}(\cdot)$ and $\T'' \in \Theta_{N,T}\setminus \B$.
	Then for any $0<p<1$ and $\eps = \delta^{2k} d^2$ we have with probability at least $1 - \bigo{1/\eps^2 N^p}$,
	\begin{align*}
	L_{N,T}(\tildeT_{N,T}) - \eps / 3  < L_{N,T}(\T^\star_{N,T}) - \eps / 3 <  \ell^*_{N,T}(\T^\star_{N,T}) < \ell^*_{N,T}(\T'') - \eps  <  L_{N,T}(\T'') - 2\eps/3
	\end{align*}
	The second and the fourth inequality uses lemma \ref{lem:variance-bound} and the third inequality uses lemma
	\ref{lem:gap-likelihood}. Therefore, with probability at least $1 - \bigo{1/(d^4 \delta^{4k} N^p)}$ all the maximizers of $L_{N,T}(\cdot)$ must be inside the ball $\B$ as long as $k = \bigo{\log_{1/\delta} N}$. This proves that for any $d$ we 
	can choose $N$ large enough (possibly dependent on $\delta$) such that with  probability at least $1 - 1/\poly(N)$ the minimizer of $L_{N,T}(\cdot)$ lies within a $d$ neighborhood of $\T^\star_{N,T}$. This proves the consistency of
	the estimate when $N$ increases to inifinity. The proof of consistency when the number of time periords $T$ increases to infinity is similar.
	
 \end{proof}

\section{Proof of theorem \ref{thm:approx-mle}\label{sec:proof_approx_mle}}

\begin{proof}
	Lemma \ref{lem:approx-optim} shows that $\checkT_{N,T}$ apporximately optimizes the original objective~\ref{eq:wt-least-square} i.e.
	$$\frac{1}{NT}\sum_{p = 1}^{2^k} \sum_{(i,t) \in S_p} w_{i,t} \left(Y_{i,t} - \checkT_{N,T}(i,t,p)\right)^2 \leq  \opt + o_p(1)$$
	where the error term $o_p(1)$ term goes to zero as $N$ goes to infinity. Now we proceed similar to the proof of theorem \ref{thm:exact-mle}.
	
	We write $\Theta_{N,T}$ to denote the parameter space i.e. $\set{\T \in \R^{T \times N \times 2^k} : \rank(\T) \leq r, \norm{\T}_{\infty} \leq L}$. 
	First consider a neighborhood $\B$ of radius $d$ centered at $\T^\star_{N,T}$ and contained within the interior
	of $\Theta_{N,T}$. 
	$$\B = \set{\T \in \R^{N \times T \times B} : \norm{\T - \T^\star_{N,T}}_2/\sqrt{NT} \leq d}$$
	Suppose $\tildeT_{N,T}$ is an approximate  minimizer of 
	$L_{N,T}(\cdot)$ over $\B$, i.e.
	$$
	L_{N,T}(\tildeT_{N,T}) \leq \min_{\T\in \B} L_{N,T}(\T) + o_p(1)
	$$
	 Consider any $\T \in \Theta_{N,T} \setminus \B$. Then there exists $\lambda < 1$ such that
	$\T' = \lambda \tildeT_{N,T}+ (1-\lambda) \T$ and $\T' \in \B$. This gives us the following :
	\begin{align*}
	&L_{N,T}(\tildeT_{N,T}) \leq L_{N,T}(\T') + o_p(1) = L_{N,T}(\lambda \tildeT_{N,T}+ (1-\lambda) \T) + o_p(1) \\&\leq \lambda L_{N,T}(\tildeT_{N,T}) + (1-\lambda) L_{N,T}(\T) + o_p(1)\\
	&\Rightarrow L_{N,T}(\tildeT_{N,T}) \leq  L_{N,T}(\T) + o_p(1)
	\end{align*}
	This first line uses the convexity of $L_{N,T}(\cdot)$ (lemma \ref{lem:concave-likelihood}).
	This proves that $\tildeT_{N,T}$ is actually an approximate minimizer of $L_{N,T}(\cdot)$ over the entire parameter space $\Theta_{N,T}$.
	Moreover, any other any other approximate minimizer of $L_{N,T}(\cdot)$ must be inside $\B$. 
	Otherwise, suppose $\T''$ approximately minimizes $L_{N,T}(\cdot)$ and $\T'' \in \Theta_{N,T}\setminus \B$.
	Then for $\eps = \delta^{2k} d^2$ we have with probability at least $1 - \bigo{1/\eps^2 N^p}$,
	\begin{align*}
	&L_{N,T}(\tildeT_{N,T}) - \eps / 3  < L_{N,T}(\T^\star_{N,T}) + o_p(1) - \eps / 3 < \ell^*_{N,T}(\tildeT_{N,T}) + o_p(1) \\ &< \ell^*_{N,T}(\T'') - \eps + o_p(1) <  L_{N,T}(\T'') - 2\eps/3 + o_p(1)
	\end{align*}
	The second and the fourth inequality uses lemma \ref{lem:variance-bound} and the third inequality uses lemma
	\ref{lem:gap-likelihood}. This gives us the following:
	\begin{align*}
		L_{N,T}(\tildeT_{N,T}) < L_{N,T}(\T'') - \eps / 3 + o_p(1)
	\end{align*}
	As $\eps$ is independent of $N$, this tells us that $\T''$ cannot be an approximate minimizer of $L_{N,T}(\cdot)$. 
	Therefore, with probability at least $1 - \bigo{1/(d^4 \delta^{4k} N^p)}$ all the approximate minimizers of $L_{N,T}(\cdot)$ must be inside the ball $\B$. This proves that as long as $k = \bigo{\log_{1/\delta}T}$, for any $d$ we 
	can choose $N$ large enough (possibly dependent on $\delta$) such that with  probability at least $1 - 1/\poly(N)$ any approximate  minimizer of $L_{N,T}(\cdot)$ lies within a $d$ neighborhood of $\T^\star_{N,T}$. The proof when $N$ if fixed and $T$ increases to infinity is similar if we use the second part of lemma \ref{lem:approx-optim}.
\end{proof}

\section{A $(1+\epsilon)$-multiplicative approximation \label{sec:mult_approx}}
In this section, we provide the details of the $(1+\eps)$-approximation algorithm for weighted tensor approximation. 
We will write $B$ to denote $2^k$. As input, we are given
a tensor $\T \in \R^{N \times T \times B}$, a weight tensor $W \in \R^{N \times T \times B}$ and our goal is to  solve 
\begin{align}
	\min_{\substack{B :\rank(B) \le r \\ \norm{B}_{\star} \leq L}} \norm{\T - B}^2_{W} \label{eq:new-objective-app}
\end{align}
Suppose we are guaranteed that $W$ has $s$ distinct rows and $s$ distinct columns. This also guarantees that the number of distinct tubes of $s$ is at most $S = 2^{\bigo{s \log s}}$.

\begin{algorithm}[h]
	\caption{Weighted Low Rank Tensor Approximation}
	\label{alg:example}
	\begin{algorithmic}[1]
		\STATE {\bfseries Input:} Tensor $\T \in \R^{N \times T \times B}$, weight tensor $W \in \R^{N \times T \times B}$, rank $r$, rank of weight tensor $s$, and $\epsilon$.
		\STATE {\bfseries Output:} Tensor $\T'$ of rank $k$ such that $\norm{\T - \T'}^2_{W} \leq (1 + \eps) \min_{B :\rank(B) \le r, \norm{\T}_{\star} \leq L} \norm{\T - B}^2_{W}.$
		\FOR{$j=1$ to $3$}
		\STATE $s_j \leftarrow \bigo{r/\epsilon}$
		\ENDFOR
		\STATE Choose three sketching matrices $S_1 \in \R^{TB\times s_1}, S_2 \in \R^{NB\times s_2}$, and $S_3 \in \R^{NT\times s_3}$
		\FOR{$j=1$ to $2$}
		\STATE \COMMENT{Omitting the third dimension}
		\FOR{$i=1$ to $s$}
		\STATE{Create $r\times s_j$ variables for matrix $P_{i,j} \in \R^{r \times s_j}$}
		\STATE{Set $(\hat{U}_j)^i = \T^j_i D_{W^j_i} S_j P^T_{j,i} (P_{j,i} P^T_{j,i})^{-1}$}
		\ENDFOR
		\ENDFOR
		\FOR{$i=1$ to $S$}
		\STATE \COMMENT{Representing the third dimension}
		\STATE{Set $(\hat{U}_3)^i = \T^3_i D_{W^3_i} S_j P^T_{3,i} (P_{3,i} P^T_{3,i})^{-1}$}
		\ENDFOR
		\STATE{Form $\norm{W \cdot (\hat{U}_1 \otimes \hat{U}_2 \otimes \hat{U}_3 - \T)}^2_F$}
		\FOR{$i=1$ to $r$}
		\STATE{Add constraint $\norm{\hat{U}_1^i}^2_2 \norm{\hat{U}_2^i}^2_2 \norm{\hat{U}_3^i}^2_2 \leq L$}
		\ENDFOR
		\STATE{Run Polynomial System Verifier to get $U_1, U_2,$ and $U_3$}
		\STATE \RETURN $U_1\otimes U_2 \otimes U_3$ 
	\end{algorithmic}
\end{algorithm}

Algorithm \ref{alg:example} closely follows algorithm G.4 in \cite{SWZ19} with modifications to handle asymmetric tensors and additional constraint on the bound for the largest singular value.
It chooses three sketching matrices of appropriate dimension to solve the original low-rank approximation problem in a low-dimensional space. The main idea is that the entries of $\hat{U}_1$ 
can be repersented with as polynomials of the variables for $i=1$ to $s$ (line 10). This is possible because the weight matrix has $s$ distinct rows and columns, which implies that it's flatenning along
the rows has $s$ distinct faces. The same thing holds for $\hat{U}_2$. However, this need not be true for $\hat{U}_3$, so they are represented through $S$ distinct denominators (line 16). With this
setup \cite{SWZ19} shows that the number of variables in the polynomial system verifier is $\bigo{r^2 s/ \eps}$ and the number of constraints is $2s + S$. In line 18, we add additional $r$ constraints.
So the total number of constraints is $2s + r + 2^{\bigo{s \log s}}$ and the total number of variables is  $\bigo{r^2 s/ \eps}$. Moreover, the degree of the new constraints in line 18 is at most $\poly(r,s,S)$.
A polynomial system can be verified in time $(\text{\# max degree of any polynomial})^{\text{\# number of variables}}$. In our case, this takes time
$$\left( \poly(r,s)\poly\left( 2^{\bigo{s \log s} }\right) \right)^{\bigo{r^2 s / \eps}} = \left( \poly(r,s)2^{\bigo{s \log s} } \right)^{\bigo{r^2 s / \eps}} = 2^{\tilde{O}\left(r^2 s^2 / \eps \right) }.$$

Recall that we want to compute a low-rank approximation
of the tensor $Y_w \in \R^{N \times T \times 2^k}$. Although $\nnz(Y_w) = NT$, 
positivity implies that the number of nonzero entries in $W$ is $\nnz(W) = NT2^k$. Therefore, the
resulting algorithm runs in time
time $\bigo{NT2^k + \max\{N,T,2^k\} 2^{\tilde{O}\left(s^2r^2/\eps\right)} }$ and outputs a 
tensor $\tildeT_{N,T}$ such that
$\norm{Y_w - \tildeT_{N,T} }^2_W \leq (1 + \eps) \min_{\T \in \R^{N \times T \times 2^k}, \rank(\T) \leq r} \norm{Y_w - \T}^2_W$ with probability at least $9/10$.


\subsection{Distinct Faces of the Weight Matrix}
Recall that we need the weight matrix $W$ to have $s$ distinct faces in two dimensions, where the
weight matrix $W$ is defined as $W(i,t,p) = \sqrt{\Pro{(i,t) \in O_p} }$. If the underlying policy satisfies the following two assumptions, then the matrix $W$ has $s$ distinct faces along the two dimensions.

\begin{enumerate}
	\item There are {\bfseries $s$ groups of subjects}
	such that the policy treats all the subjects in a group identically.
	\item There are {\bfseries $s$ groups of time periods} such that for any two
	time $t$ and $t'$ belonging to the same group we have
	the same marginal probabilities across all the
	subjects ($\Pro{(i,t) \in O_p} = \Pro{(i,t') \in O_p}\
	\forall i,p$).
\end{enumerate}
These two assumptions together imply that $W$ has $s$ distinct faces in two
dimensions, and allows an efficient $(1+\eps)$-multiplicative approximation of the tensor approximation problem defined in equation~\ref{eq:new-objective-app}.

\section{Additional Lemmata}
	\begin{lemma}\label{lem:variance-bound}
	Suppose $\T^\star_{N,T}$ exists for all $N$ and $T$. 
\begin{itemize}
	\item If $k \leq  \bigo{\log_{(1-\delta)/\delta} N}$, $\Pro{\abs{L_{N,T}(\T) - \ell^*_{N,T}(\T)} > \epsilon} \rightarrow 0$ as $N \rightarrow \infty$.
	\item If $k \leq  \bigo{\log_{(1-\delta)/\delta} N}$, and \ref{asn:decay} holds $\Pro{\abs{L_{N,T}(\T) - \ell^*_{N,T}(\T)} > \epsilon} \rightarrow 0$ as $T \rightarrow \infty$.
\end{itemize}
	
\end{lemma}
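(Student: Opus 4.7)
The plan is to apply Chebyshev's inequality after carefully bounding the variance of $L_{N,T}(\T)$. Write $Z_{i,t} := w_{i,t}(Y_{i,t} - \T(i,t,S(i,t)))^2$ so that $L_{N,T}(\T) = \frac{1}{NT}\sum_{i,t} Z_{i,t}$ and $\ell^\star_{N,T}(\T) = \E{L_{N,T}(\T)}$. The first step is a uniform bound on each $Z_{i,t}$: by positivity each of the $k$ factors in $w_{i,t}$ is at most $(1-\delta)/\delta$, so $w_{i,t} \le ((1-\delta)/\delta)^k$, and since both $Y_{i,t}$ and the entries of $\T$ lie in $[-L,L]$ (by assumption \ref{asn:bdd} applied to $\T^\star_{N,T}$ and the constraint in $\Theta_{N,T}$), we get $|Z_{i,t}| \le M((1-\delta)/\delta)^k$ for $M = 4L^2$. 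In particular $\text{Var}(Z_{i,t}) \le M^2((1-\delta)/\delta)^{2k}$.

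For the first bullet (large $N$), I would exploit the fact that the samples across different agents $i$ are i.i.d. under the policy, so $X_i := \frac{1}{T}\sum_t Z_{i,t}$ are independent across $i$, each bounded by $M((1-\delta)/\delta)^k$. Thus
\[
\var{L_{N,T}(\T)} = \frac{1}{N^2}\sum_{i=1}^N \var{X_i} \le \frac{M^2}{N}\left(\frac{1-\delta}{\delta}\right)^{2k},
\]
and Chebyshev gives $\Pro{|L_{N,T} - \ell^\star_{N,T}| > \eps} \le M^2((1-\delta)/\delta)^{2k}/(\eps^2 N)$, which vanishes as $N\to\infty$ as long as $k \le \bigo{\log_{(1-\delta)/\delta} N}$ with a sufficiently small constant.

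For the second bullet (large $T$, $N$ fixed) independence across agents is no longer enough, so the key is to use \ref{asn:decay} to decouple $Z_{i,t}$ from $Z_{i,t'}$ when $|t-t'|$ is large. Split the double sum according to whether $|t-t'| \le 2k$ or $|t-t'| > 2k$. For the near-diagonal terms, there are $\bigo{NTk}$ of them and each contributes at most $M^2((1-\delta)/\delta)^{2k}$ to the variance. For the far-off-diagonal terms, the two length-$k$ histories driving $Z_{i,t}$ and $Z_{i,t'}$ are disjoint in time, so by applying \ref{asn:decay} to the conditional distribution of the later history given the earlier history and covariates I can write
\[
|\cov{Z_{i,t}}{Z_{i,t'}}| \le C\, M^2\left(\frac{1-\delta}{\delta}\right)^{2k} (t'-t)^{-(1-\gamma)},
\]
where I am reading \ref{asn:decay} with the natural polynomial decay exponent. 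Summing over $|t-t'| > 2k$ yields a contribution of $\bigo{NT \cdot T^{\gamma} \cdot M^2((1-\delta)/\delta)^{2k}}$ to $\sum\cov{\cdot}{\cdot}$. Dividing by $N^2T^2$ and applying Chebyshev then gives a bound of the form $\bigo{((1-\delta)/\delta)^{2k}/T^{1-\gamma}}/(\eps^2 N)$, which vanishes as $T\to\infty$ provided $k \le \bigo{\log_{(1-\delta)/\delta} T}$ (which is the natural reading of the hypothesis in the second bullet).

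The main obstacle will be the covariance estimate in the second bullet: turning the ``ratio of joint to marginal is $1\pm\eps$'' statement of \ref{asn:decay} into a bona fide covariance bound on the nonlinear quantities $Z_{i,t}$. Concretely, one has to rewrite $\E{Z_{i,t}Z_{i,t'}} - \E{Z_{i,t}}\E{Z_{i,t'}}$ by conditioning on the full history up to time $t$ and using \ref{asn:decay} to replace the conditional law of the later history by its marginal, at the cost of an $\eps(t'-t)$ multiplicative error that, combined with the uniform bound $|Z|\le M((1-\delta)/\delta)^k$, yields the claimed polynomial covariance decay. Once this estimate is in hand, everything else is a straightforward Chebyshev calculation.
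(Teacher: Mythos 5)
Your strategy coincides with the paper's: Chebyshev's inequality on $\var{L_{N,T}(\T)}$, cross-agent independence for the $N\to\infty$ bullet, and a near/far split of the within-agent covariances for the $T\to\infty$ bullet, with \ref{asn:decay} controlling the far terms. The one genuine gap is your uniform bound $|Z_{i,t}|\le 4L^2((1-\delta)/\delta)^k$: you justify it by saying $Y_{i,t}$ lies in $[-L,L]$ ``by \ref{asn:bdd}'', but \ref{asn:bdd} bounds the singular values (hence the entries) of $\T^\star_{N,T}$, i.e.\ the \emph{means} $\E{Y_{i,t}(a_{i,t-k+1:t})}$; it says nothing about the realized outcomes, which contain noise and need not be bounded (the paper's own simulation adds Gaussian noise to the tensor entry). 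Consequently $Z_{i,t}$ is not uniformly bounded, and neither your variance bound $\var{X_i}\le M^2((1-\delta)/\delta)^{2k}$ nor your near-diagonal covariance bound follows as stated. The paper instead invokes bounded fourth moments of the weighted counterfactual outcomes, i.e.\ $\E{w_{i,t}^2(Y_{i,t}-\T(i,t,S(i,t)))^4}\le ((1-\delta)/\delta)^{2k}M_1$, and bounds each variance and covariance term by expectations of products. Your argument is repaired by making that substitution everywhere you use the sup-norm bound, so the gap is local and fixable, but as written the step is unjustified.

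Two further remarks. First, you read \ref{asn:decay} as giving covariance decay $(t'-t)^{-(1-\gamma)}$; as literally written, \ref{asn:decay} (and the conclusion of the paper's Lemma \ref{lem:covariance-bound}) has $\eps = \bigo{(t'-t)^{1-\gamma}}$ with $\gamma<1$, a quantity that \emph{grows} with the gap. Under that literal reading the far-off-diagonal sum is of order $NT^{3-\gamma}$, and after dividing by $N^2T^2$ a factor $T^{1-\gamma}$ survives and diverges, so the second bullet would not close; your ``natural decay'' reinterpretation is in fact what makes the argument work (the paper's own displayed rates are internally inconsistent on this point), but you should say explicitly that you are correcting the assumption rather than using it as stated. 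Second, the covariance estimate you defer as ``the main obstacle'' is precisely the content of the paper's Lemmas \ref{lem:wtlogit}, \ref{lem:wtwptlogit} and \ref{lem:covariance-bound}, and your sketched route (condition on the history up to time $t$, replace the conditional law of the later history by its marginal at multiplicative cost $1\pm\eps$, then multiply by a moment bound) is the same argument; so that part of the plan is sound, though it is a substantial computation rather than a remark.
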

\begin{proof}
	We will write $S(i,t)$ to denote the history of length $k$ for user $i$ at time $t$. With this notation, our objective function becomes,
	\begin{align*}
		L_{N,T}(\T) = \frac{1}{NT} \sum_{i,t} w_{i,t} \left(Y_{i,t} - \T(i,t,S(i,t))\right)^2
	\end{align*}
	\begin{align*}
	&\Pro{\abs{L_{N,T}(\T) - \ell^*_{N,T}(\T)} > \eps} \leq \frac{\var{L_{N,T}(\T)}}{\eps^2} = \frac{1}{\eps^2 N^2 T^2} \var{\sum_{i,t} w_{i,t} \left(Y_{i,t} - \T(i,t,S(i,t))\right)^2} \\
	&= \frac{1}{\eps^2 N^2 T^2} \left[ \sum_{i,t} \var{w_{i,t} \left(Y_{i,t} - \T(i,t,S(i,t))\right)^2} \right. \\
	&+ \left.  2  \sum_{i, t < t'} \textrm{cov}\left( w_{i,t}\left(Y_{i,t} - \T(i,t,S(i,t))\right)^2, w_{i,t'} \left(Y_{i,t} - \T(i,t',S(i,t'))\right)^2\right)\right]\\
	&\leq \frac{1}{\eps^2 N^2 T^2} \left[ \sum_{i,t} \E{w^2_{i,t} \left(Y_{i,t} - \T(i,t,S(i,t))\right)^4}  \right. \\ &+ 2  \left. \sum_{i, t < t'\leq t + k} \E{ w_{i,t} w_{i,t'}\left(Y_{i,t} - \T(i,t,S(i,t))\right)^2\left(Y_{i,t'} - \T(i,t',S(i,t'))\right)^2}  \right. \\
	&+ \left. 2 \sum_{i, t + k < t'} \textrm{cov}\left( w_{i,t} \left(Y_{i,t} - \T(i,t,S(i,t))\right)^2, w_{i,t'} \left(Y_{i,t'} - \T(i,t',S(i,t'))\right)^2\right)  \right] \\
	\end{align*}
	Now we bound each term in the last summation. First, consider the case when $T$ is fixed and $N$ goes to infinity. Since weightes are
	bounded by $\left( \frac{1-\delta}{\delta} \right)^k$ and the fourth moments of the counterfactual outcomes are bounded, there exists a constant $M_1 > 0$
	such that $\E{w^2_{i,t} \left(Y_{i,t} - \T(i,t,S(i,t))\right)^4}  \leq \left( \frac{1-\delta}{\delta} \right)^{2k}M_1$. By a similar argument we can bound
	the second term 
	$\E{ w_{i,t} w_{i,t'}\left(Y_{i,t} - \T(i,t,S(i,t))\right)^2\left(Y_{i,t'} - \T(i,t',S(i,t'))\right)^2} $ by $ \left( \frac{1-\delta}{\delta} \right)^{2k}M_1$. Finally, we can
	bound the covariance term by the expectation of the products and get a similar bound. This gives us the following bound on the probability:
	
	\begin{align*}
	&\Pro{\abs{L_{N,T}(\T) - \ell^*_{N,T}(\T)} > \eps} \leq \frac{1}{\eps^2 N^2 T^2} \left[NT M_1\left( \frac{1-\delta}{\delta} \right)^{2k} + 2NT^2 M_1 \left( \frac{1-\delta}{\delta} \right)^{2k}\right] \\&= \bigo{\frac{1}{\eps^2 N^{p}}}
	\end{align*}
	for any $0 < p < 1$ as long as $k = \bigo{(1-p)\log_{(1-\delta)/\delta} N}$. This gives us the first result.

	Now consider the case when $T$ increases to infinity and $N$ is fixed. As before we bound $\E{w^2_{i,t} \left(Y_{i,t} - \T(i,t,S(i,t))\right)^4}$ by $\left( \frac{1-\delta}{\delta} \right)^{2k}M_1$. When $t < t' \le t+k$ we bound $\E{ w_{i,t} w_{i,t'}\left(Y_{i,t} - \T(i,t,S(i,t))\right)^2\left(Y_{i,t'} - \T(i,t',S(i,t'))\right)^2} $ by $ \left( \frac{1-\delta}{\delta} \right)^{2k}M_1$ and there are $Tk$ such terms. On the other hand, if $t' > t+k$ and \ref{asn:decay} holds lemma \ref{lem:covariance-bound} proves a bound of $O(\left(t'-t\right)^{1-\gamma} )$ on the covariance term. This gives us the following bound on the probability:

	\begin{align*}
	&\Pro{\abs{L_{N,T}(\T) - \ell^*_{N,T}(\T)} > \eps} \leq \frac{1}{\eps^2 N^2 T^2} \left[NT M_1\left( \frac{1-\delta}{\delta} \right)^{2k} + 2kT M_1 \left( \frac{1-\delta}{\delta} \right)^{2k} + \sum_{t: t' > t+k} c \left(t'-t\right)^{1-\gamma} \right] \\&= \bigo{\frac{1}{\eps^2 T^{p}}}
	\end{align*}
	for any $0 < p < \gamma$ as long as $k = \bigo{(1-p)\log_{(1-\delta)/\delta} N}$. This establishes the second result.
%
\end{proof}

\begin{lemma}\label{lem:wtlogit}
	\begin{align*}
	&\Em{H_{i,1:t}}{w_{i,t} (Y_{i,t} - \T(i,t,S(i,t))^2)} = \sum_{A_{i,t-k+1:t}} \Pro{A_{i,t-k+1:t}}^2 \sum_{L_{i,1:t}}  \Pro{L_{i,1:t}| A_{i,t-k+1:t}}\times \\
	&\sum_{Y_{i,t}}  \Pro{Y_{i,t}| A_{i,t-k+1:t}, L_{i,1:t}} 
	\left(Y_{i,t}(A_{i,t-k+1:t}, L_{i,1:t}) - \T(i,t,A_{i,t-k+1:t})\right)^2 \\
	\end{align*} 
\end{lemma}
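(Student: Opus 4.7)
The plan is to verify the identity by a direct computation that expands the expectation, exploits the fact that several variables do not appear in the integrand, and then uses the chain rule together with the definition of $w_{i,t}$ to regroup probability factors into the form on the right-hand side. Throughout I interpret $H_{i,1:t}$ as the full observed history $(L_{i,1:t}, A_{i,1:t})$ with the outcome $Y_{i,t}$ included in the expectation.

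First, I would write
\begin{align*}
\Em{H_{i,1:t}}{w_{i,t}(Y_{i,t}-\T(i,t,S(i,t)))^2} = \sum_{A_{i,1:t},L_{i,1:t},Y_{i,t}} \Pro{A_{i,1:t}, L_{i,1:t}, Y_{i,t}}\, w_{i,t}\,(Y_{i,t}-\T(i,t,A_{i,t-k+1:t}))^2.
\end{align*}
Next, I would observe that by Assumption~\ref{as:outcome} and the form of $w_{i,t}$, neither the weight nor the squared difference depends on $A_{i,1:t-k}$. Hence I can marginalize these treatments and pull the joint into the reduced form
\begin{align*}
\sum_{A_{i,t-k+1:t},L_{i,1:t},Y_{i,t}} \Pro{A_{i,t-k+1:t}, L_{i,1:t}, Y_{i,t}}\, w_{i,t}\,(Y_{i,t}-\T(i,t,A_{i,t-k+1:t}))^2,
\end{align*}
and then factor $\Pro{A_{i,t-k+1:t}, L_{i,1:t}, Y_{i,t}} = \Pro{A_{i,t-k+1:t}, L_{i,1:t}}\,\Pro{Y_{i,t}\mid A_{i,t-k+1:t}, L_{i,1:t}}$.

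The central algebraic step is to identify the weight with a ratio of two chain-rule decompositions:
\begin{align*}
\prod_{s=t-k+1}^{t}\Pro{A_{i,s}\mid A_{i,t-k+1:s-1}} = \Pro{A_{i,t-k+1:t}}, \qquad \prod_{s=t-k+1}^{t}\Pro{A_{i,s}\mid A_{i,t-k+1:s-1}, L_{i,1:t}} = \Pro{A_{i,t-k+1:t}\mid L_{i,1:t}}.
\end{align*}
Substituting, the combined factor $w_{i,t}\cdot \Pro{A_{i,t-k+1:t}, L_{i,1:t}}$ can be written in terms of the marginal $\Pro{A_{i,t-k+1:t}}$ and the reversed conditional $\Pro{L_{i,1:t}\mid A_{i,t-k+1:t}}$ via Bayes' rule, which produces the pattern $\Pro{A_{i,t-k+1:t}}^2\,\Pro{L_{i,1:t}\mid A_{i,t-k+1:t}}$ appearing on the right-hand side. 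Finally I would invoke consistency to rewrite the observed $Y_{i,t}$ inside the squared loss as the potential outcome $Y_{i,t}(A_{i,t-k+1:t}, L_{i,1:t})$, yielding the stated expression.

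The main obstacle is the bookkeeping in the central step, namely carefully matching the factor $w_{i,t}\cdot \Pro{A_{i,t-k+1:t}, L_{i,1:t}}$ with $\Pro{A_{i,t-k+1:t}}^2\Pro{L_{i,1:t}\mid A_{i,t-k+1:t}}$: one has to apply Bayes' rule in the right direction and correctly track which conditioning sets appear in the numerator and denominator of the weight, since unlike the standard stabilized MSM weight the denominator here conditions on $L_{i,1:t}$ rather than only $L_{i,1:s}$, and on $A_{i,t-k+1:s-1}$ rather than the full past $A_{i,1:s-1}$. Once this matching is done correctly, the remaining summations over $Y_{i,t}$ and $L_{i,1:t}$ mirror those written on the right-hand side and the identity follows.
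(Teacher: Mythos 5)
Your route is the same as the paper's: expand the expectation over the joint law, drop the treatments $A_{i,1:t-k}$ and past outcomes on which the integrand does not depend, telescope the numerator and denominator of $w_{i,t}$ via the chain rule into $\Pro{A_{i,t-k+1:t}}/\Pro{A_{i,t-k+1:t}\mid L_{i,1:t}}$, and cancel against the joint. (Your observation that the telescoping needs the denominator factors to condition on the full $L_{i,1:t}$ is a good one; the paper is itself inconsistent on this point across its displays.) Up to that point the computation is correct, and what it yields is
\begin{align*}
&\Em{H_{i,1:t}}{w_{i,t}\left(Y_{i,t}-\T(i,t,S(i,t))\right)^2}\\
&\qquad=\sum_{A_{i,t-k+1:t}}\sum_{L_{i,1:t}}\sum_{Y_{i,t}}\Pro{A_{i,t-k+1:t}}\,\Pro{L_{i,1:t}}\,\Pro{Y_{i,t}\mid A_{i,t-k+1:t},L_{i,1:t}}\left(Y_{i,t}(A_{i,t-k+1:t},L_{i,1:t})-\T(i,t,A_{i,t-k+1:t})\right)^2 .
\end{align*}

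The gap is exactly in what you call the central algebraic step and defer as bookkeeping: that matching cannot be done. After cancellation the combined factor is $w_{i,t}\,\Pro{A_{i,t-k+1:t},L_{i,1:t}}=\Pro{A_{i,t-k+1:t}}\,\Pro{L_{i,1:t}}$, whereas the lemma's right-hand side carries $\Pro{A_{i,t-k+1:t}}^2\,\Pro{L_{i,1:t}\mid A_{i,t-k+1:t}}$, which by Bayes' rule equals $\Pro{A_{i,t-k+1:t}}\,\Pro{A_{i,t-k+1:t}\mid L_{i,1:t}}\,\Pro{L_{i,1:t}}$. The two expressions differ by the factor $\Pro{A_{i,t-k+1:t}\mid L_{i,1:t}}$ and coincide only when that conditional probability equals one, which positivity explicitly rules out; no direction of Bayes' rule produces the stated pattern from $\Pro{A}\Pro{L}$. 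To be fair, the paper's own proof makes precisely the same unjustified jump between its penultimate line (the $\Pro{A}\Pro{L}$ form above) and its final line (the lemma as stated), so you have faithfully reproduced the paper's argument, defect included. But as a self-contained proof, the step you flagged as the main obstacle is not a bookkeeping subtlety that careful tracking resolves: the identity that actually follows from the expansion is the $\Pro{A_{i,t-k+1:t}}\Pro{L_{i,1:t}}$ version, which is also the form consistent with the IPW intuition that weighting renders treatment independent of the covariates in the pseudo-population.
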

\begin{proof}
	We assume that the outcome variable is discrete. The proof for continuous variable is similar.
	\begin{align*}
	&\Em{H_{i,1:t}}{w_{i,t} (Y_{i,t} - \T(i,t,S(i,t))^2)}\\ 
	&= \sum_{A_{i,1:t}} \sum_{L_{i,1:t}} \sum_{Y_{i,1:t}} \Pro{A_{i,1:t}, L_{i,1:t}, Y_{i,1:t}}  \prod_{s=t-k+1}^t \frac{\Pro{A_{i,s} | A_{i,t-k+1:s-1}}}{\Pro{A_{i,s} | A_{i,t-k+1:s-1}, L_{i,1:s} } } \times \\
	&\left(Y_{i,t}(A_{i,t-k+1:t}, L_{i,1:t}) - \T(i,t,A_{i,t-k+1:t})\right)^2 \\
	&= \sum_{A_{i,1:t}} \sum_{L_{i,1:t}} \sum_{Y_{i,1:t}}\Pro{A_{i,1:t}, L_{i,1:t}, Y_{i,1:t}}   \frac{\Pro{A_{i,t-k+1:t}}}{\Pro{A_{i,t-k+1:t} | L_{i,1:t}}} \times\\
	&\left(Y_{i,t}(A_{i,t-k+1:t}, L_{i,1:t}) - \T(i,t,A_{i,t-k+1:t})\right)^2 \\
	&= \sum_{A_{i,t-k+1:t}} \sum_{L_{i,1:t}} \sum_{Y_{i,t}} \Pro{A_{i,t-k+1:t}, L_{i,1:t}, Y_{i,t}}  \frac{\Pro{A_{i,t-k+1:t}}}{\Pro{A_{i,t-k+1:t} | L_{i,1:t}}} \times\\
	&\left(Y_{i,t}(A_{i,t-k+1:t}, L_{i,1:t}) - \T(i,t,A_{i,t-k+1:t})\right)^2 \\
	&= \sum_{A_{i,t-k+1:t}} \sum_{L_{i,1:t}} \sum_{Y_{i,t}} \Pro{A_{i,t-k+1:t}} \Pro{L_{i,1:t}} \Pro{Y_{i,t}| A_{i,t-k+1:t}, L_{i,1:t}} \times \\
	&\left(Y_{i,t}(A_{i,t-k+1:t}, L_{i,1:t}) - \T(i,t,A_{i,t-k+1:t})\right)^2 \\
	&= \sum_{A_{i,t-k+1:t}} \Pro{A_{i,t-k+1:t}}^2 \sum_{L_{i,1:t}}  \Pro{L_{i,1:t}| A_{i,t-k+1:t}}\sum_{Y_{i,t}}  \Pro{Y_{i,t}| A_{i,t-k+1:t}, L_{i,1:t}} \times \\
&\left(Y_{i,t}(A_{i,t-k+1:t}, L_{i,1:t}) - \T(i,t,A_{i,t-k+1:t})\right)^2 \\
	\end{align*}
\end{proof}

	\begin{lemma}\label{lem:wt-bound}
	$w_{i,t} \leq \left(\frac{1-\delta}{\delta} \right)^k$
\end{lemma}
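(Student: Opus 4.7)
The plan is to bound each of the $k$ factors in the product defining $w_{i,t}$ separately, and then multiply. Concretely, for each $s \in \{t-k+1,\ldots,t\}$ I want to show
\[
\frac{\Pro{A_{i,s} \mid A_{i,t-k+1:s-1}}}{\Pro{A_{i,s} \mid A_{i,t-k+1:s-1}, L_{i,1:t}}} \;\le\; \frac{1-\delta}{\delta}.
\]
Multiplying over the $k$ values of $s$ then gives the claimed bound. So the whole argument reduces to bounding a conditional probability of a single binary treatment from above by $1-\delta$ (numerator) and from below by $\delta$ (denominator).

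For the numerator, the positivity assumption only gives bounds on $\Pro{A_{i,s} \mid A_{i,1:s-1}, L_{i,1:s}}$, conditioned on the \emph{full} history. To reduce to this, I would apply the tower property: writing $a_s$ for the realized value,
\[
\Pro{A_{i,s}=a_s \mid A_{i,t-k+1:s-1}} \;=\; \E{\Pro{A_{i,s}=a_s \mid A_{i,1:s-1}, L_{i,1:s}} \;\Big|\; A_{i,t-k+1:s-1}}.
\]
By positivity the inner conditional probability lies in $(\delta, 1-\delta)$, so the expectation also lies in $(\delta,1-\delta)$; in particular the numerator is at most $1-\delta$.

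For the denominator, the key observation is that by the DAG (Figure~\ref{fig:causal_graph}), the treatment mechanism at time $s$ does not depend on future covariates; i.e., $A_{i,s} \CI L_{i,s+1:t} \mid A_{i,1:s-1}, L_{i,1:s}$. Hence
\[
\Pro{A_{i,s}=a_s \mid A_{i,t-k+1:s-1}, L_{i,1:t}} \;=\; \E{\Pro{A_{i,s}=a_s \mid A_{i,1:s-1}, L_{i,1:s}} \;\Big|\; A_{i,t-k+1:s-1}, L_{i,1:t}},
\]
where the remaining averaging is over $A_{i,1:t-k}$. Again by positivity the integrand is at least $\delta$, so the denominator is at least $\delta$. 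Combining the two bounds gives the per-factor ratio bound, and the result follows by taking the product of $k$ such factors.

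The only subtle point, and the one I would spend care on, is the justification that the denominator's conditioning on $L_{i,1:t}$ (which includes covariates at times later than $s$) can be reduced to conditioning on $L_{i,1:s}$ inside the inner probability; this is exactly the conditional-independence structure implied by the causal graph, so it is a statement about the policy and not a real obstacle. Everything else is elementary manipulation of conditional expectations together with positivity.
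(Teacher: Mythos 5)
Your overall strategy (bound each of the $k$ factors by $(1-\delta)/\delta$ via the tower property plus positivity, then multiply) is exactly the paper's argument, and your numerator bound is correct and matches the paper's. The gap is in the denominator step. The conditional independence you invoke, $A_{i,s} \CI L_{i,s+1:t} \mid A_{i,1:s-1}, L_{i,1:s}$, is false under the paper's causal graph: Figure~\ref{fig:causal_graph} has directed edges $A_{i,s} \to L_{i,s+1}$ (and paths $A_{i,s} \to Y_{i,s} \to L_{i,s+1}$), so covariates after time $s$ are \emph{descendants} of $A_{i,s}$; adjacent nodes in a DAG cannot be d-separated by any conditioning set. "The policy does not look at future covariates" does not imply "conditioning on future covariates leaves the law of $A_{i,s}$ unchanged" --- the latter fails precisely because treatments affect later covariates, which is the time-varying-confounding phenomenon that motivates MSMs in the first place. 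The failure is not a repairable technicality: suppose $A_{i,s}$ has propensity $1/2$ and $L_{i,s+1}$ equals $A_{i,s}$ flipped with small probability $p < \delta$. Positivity holds, yet on the positive-probability event that the realized treatment disagrees with the signal, $\Pro{A_{i,s} \mid A_{i,t-k+1:s-1}, L_{i,1:t}} = p < \delta$, so that factor's ratio is $1/(2p)$, which can exceed $(1-\delta)/\delta$. Thus with the denominator literally conditioned on $L_{i,1:t}$, the per-factor bound (and this route to the lemma) breaks down.

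The paper's own proof avoids the issue by taking the denominator to be $\Pro{A_{i,s} \mid A_{i,t-k+1:s-1}, L_{i,1:s}}$, i.e., covariates only up to the decision time $s$; the display defining $w_{i,t}$ in the main text (which writes $L_{i,1:t}$) should be read this way, as both the lemma's proof and the standard IPTW construction use the decision-time propensity. Under that reading no conditional independence is needed at all: the denominator is a mixture, over the early treatments $A_{i,1:t-k}$, of the full-history propensities $\Pro{A_{i,s} \mid A_{i,1:s-1}, L_{i,1:s}}$, each of which is at least $\delta$ by positivity, so the mixture is at least $\delta$. With that substitution your remaining steps coincide with the paper's proof. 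So the fix is not to justify your independence claim (it cannot be justified under this model), but to replace the conditioning set $L_{i,1:t}$ by $L_{i,1:s}$ in the denominator before applying the tower property.
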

\begin{proof}
	\begin{align*}
	w_{i,t} = \prod_{s=t-k+1}^t \frac{\Pro{A_{i,s} | A_{i,t-k+1:s-1}}}{\Pro{A_{i,s} | A_{i,t-k+1:s-1}, L_{i,1:s}} }
	\end{align*}
	Recall that the given policy satisfies positivity with constant $\delta$ i.e. for each $A_{i,1:s-1}, L_{i,1:s}, Y_{i,1:s-1}$, we have
	\[
	\delta < \Pro{A_{i,s} | A_{i,1:s-1}, L_{i,1:s}} < 1 - \delta
	\]
	First, consider the term in the denominator.
	\begin{align*}
	\Pro{A_{i,s} | A_{i,t-k+1:s-1}, L_{i,1:s}}  &= \sum_{A_{i,1:t-k}} \Pro{A_{i,1:t-k}| L_{i,1:s-1}} \Pro{A_{i,s} | A_{i,1:s-1}, L_{i,1:s}} \\
	&\geq \delta \sum_{A_{i,1:t-k}}\Pro{A_{i,1:t-k}| L_{i,1:s-1}} = \delta
	\end{align*}
	This gives a lower  bound of $\delta$ on the term in the denominator. Now consider the term in the numerator.
	Positivity implies that $\Pro{A_{i,s} | A_{i,1:s-1}} = \Pro{A_{i,s}| A_{i,1:s-1}, L_{i,1:s}, Y_{i,1:s-1}} \Pro{L_{i,1:s}, Y_{i,1:s-1}} \leq 1 - \delta$. 
	These two results imply that each term in the product of $w_{i,t}$ 
	is bounded by $(1-\delta)/\delta$ and we get the desired bound on $w_{i,t}$. 
\end{proof}

\begin{lemma}\label{lem:wtwptlogit}
	\begin{align*}
	&\Em{H_{i,1:t'}}{w_{i,t} w_{i,t'} (Y_{i,t} - \T(i,t,S(i,t))^2) (Y_{i,t'} - \T(i,t',S(i,t'))^2)} \\
	&= \sum_{A_{i,t-k+1:t}} \sum_{L_{i,1:t}} \sum_{Y_{i,t}} \Pro{A_{i,t-k+1:t}} \Pro{L_{i,1:t}} \Pro{Y_{i,t}|A_{i,t-k+1:t},L_{i,1:t}}\times  \\
	&\left(Y_{i,t}(A_{i,t-k+1:t}, L_{i,1:t}) - \T(i,t,A_{i,t-k+1:t})\right)^2\times \\
	& \sum_{A_{i,t'-k+1:t'}} \sum_{L_{i,t+1:t'}} \sum_{Y_{i,t'}} \Pro{A_{i,t'-k+1:t'}} \Pro{A_{i,t'-k+1:t'}|L_{i,1:t},A_{i,t-k+1:t}}\times \\ &\Pro{L_{i,t+1:t'}|A_{i,t-k+1:t}, L_{i,1:t},A_{i,t'-k+1:t'}} \times 
	\\& \Pro{Y_{i,t'}|A_{i,t'-k+1:t'},L_{i,1:t'}} \left(Y_{i,t'}(A_{i,t'-k+1:t'}, L_{i,1:t'}) - \T(i,t',A_{i,t'-k+1:t'})\right)^2  \\
	\end{align*} 
\end{lemma}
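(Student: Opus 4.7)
The plan is to mimic the telescoping calculation that established \cref{lem:wtlogit}, but now carry it through for two time points $t < t'$ simultaneously. Concretely, I would start by writing the full expectation as a multiple sum over $H_{i,1:t'} = (A_{i,1:t'}, L_{i,1:t'}, Y_{i,1:t'})$, and then factor the joint distribution using the chain rule along the temporal ordering suggested by the DAG in \cref{fig:causal_graph}:
\[
\Pro{A_{i,1:t'}, L_{i,1:t'}, Y_{i,1:t'}} = \prod_{s=1}^{t'} \Pro{L_{i,s} \mid A_{i,1:s-1}, L_{i,1:s-1}}\,\Pro{A_{i,s} \mid A_{i,1:s-1}, L_{i,1:s}}\,\Pro{Y_{i,s} \mid A_{i,1:s}, L_{i,1:s}}.
\]

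Next I would substitute the two weight products $w_{i,t}$ and $w_{i,t'}$, each of which has $k$ factors whose denominators are of the form $\Pro{A_{i,s}\mid A_{i,t-k+1:s-1}, L_{i,1:s}}$ and $\Pro{A_{i,s}\mid A_{i,t'-k+1:s-1}, L_{i,1:s}}$ respectively. The key observation, exactly as in the one-time-point proof but now applied twice, is that after marginalizing out the treatments $A_{i,1:t-k}$ in the first window (and the treatments $A_{i,t+1:t'-k}$ in the gap between the two windows), the denominators of $w_{i,t}$ collapse the conditional distributions of $A_{i,t-k+1:t}$ into the marginal $\Pro{A_{i,t-k+1:t}}$, and analogously for $w_{i,t'}$ the denominators reduce the conditional distribution of $A_{i,t'-k+1:t'}$ given the relevant history into $\Pro{A_{i,t'-k+1:t'}\mid A_{i,t-k+1:t}, L_{i,1:t}}$ (since those are the variables already in the conditioning set at that stage).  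Summing out $Y_{i,1:t-1}$ and $Y_{i,t+1:t'-1}$, which do not appear in the integrand, gives the factorization displayed in the lemma.

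The calculation will break into three blocks, corresponding respectively to time steps $1,\ldots,t$, the ``gap'' $t+1,\ldots,t'-k$, and the second window $t'-k+1,\ldots,t'$. The block up to $t$ reproduces the single-time-point computation of \cref{lem:wtlogit}, yielding $\Pro{A_{i,t-k+1:t}}\Pro{L_{i,1:t}}\Pro{Y_{i,t}\mid A_{i,t-k+1:t}, L_{i,1:t}}$ after the same telescoping; the gap terms marginalize out completely because $A_{i,t+1:t'-k}$ and $Y_{i,t+1:t'-1}$ do not appear anywhere else; and the second block produces the remaining factors with the extra dependence on the first window's realized $(A_{i,t-k+1:t}, L_{i,1:t})$ now visible in the numerator $\Pro{A_{i,t'-k+1:t'}\mid A_{i,t-k+1:t}, L_{i,1:t}}$ coming from the stabilizing numerators of $w_{i,t'}$ (which are conditioned only on past treatments starting from $t'-k+1$).

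The main technical obstacle is bookkeeping: in contrast to the single-point case, the covariate sequence $L_{i,t+1:t'}$ in the gap is not independent of $A_{i,t-k+1:t}$ or $L_{i,1:t}$, so one must be careful not to collapse $\Pro{L_{i,t+1:t'}\mid A_{i,t-k+1:t}, L_{i,1:t}, A_{i,t'-k+1:t'}}$ into an unconditional marginal. I would therefore perform the sums in the order dictated by the DAG (earliest variables outermost), verifying at each step which conditioning variables remain active, and use the positivity assumption implicitly only to justify that the ratios defining $w_{i,t}, w_{i,t'}$ are well-defined. Once this order of summation is respected, the cancellations are exactly analogous to those in \cref{lem:wtlogit} and deliver the displayed expression.
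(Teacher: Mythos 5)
Your overall strategy---expand the expectation as a sum over the full history, telescope the two weight products, factor the joint via the chain rule along the DAG, and marginalize the unused variables---is the same as the paper's, and your handling of the first block (times $1,\ldots,t$) and your warning about not collapsing $\Pro{L_{i,t+1:t'}\mid A_{i,t-k+1:t},L_{i,1:t},A_{i,t'-k+1:t'}}$ into a marginal are both correct. The gap is in your account of the second window, which is the only genuinely new part of this lemma relative to \cref{lem:wtlogit}, and as stated your mechanism would not produce the displayed expression.

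Concretely, the lemma's right-hand side contains the product $\Pro{A_{i,t'-k+1:t'}}\,\Pro{A_{i,t'-k+1:t'}\mid L_{i,1:t},A_{i,t-k+1:t}}$---two different probabilities of the same treatment block---and the proof must identify where each factor comes from. The stabilizing numerators of $w_{i,t'}$ condition only on treatments inside the window $t'-k+1:t'$, so they telescope to the unconditional marginal $\Pro{A_{i,t'-k+1:t'}}$; the conditional factor $\Pro{A_{i,t'-k+1:t'}\mid L_{i,1:t},A_{i,t-k+1:t}}$ is not produced by the weights at all, but is the chain-rule factor of the joint distribution (after the gap treatments $A_{i,t+1:t'-k}$ are summed out of its conditioning set---they do not simply ``disappear,'' as you suggest; summing them converts conditionals into marginalized conditionals). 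Your write-up inverts this bookkeeping: you first assert that the denominators of $w_{i,t'}$ reduce the treatment distribution to $\Pro{A_{i,t'-k+1:t'}\mid A_{i,t-k+1:t},L_{i,1:t}}$, and later that this same factor comes from the stabilizing numerators, which contradicts your own parenthetical remark that those numerators condition only on treatments from $t'-k+1$ onward. Moreover, the denominators of $w_{i,t'}$ telescope to $\Pro{A_{i,t'-k+1:t'}\mid L_{i,1:t'}}$---conditioning on covariates through time $t'$, not just through time $t$---and there is no matching factor in the chain-rule factorization for them to cancel against. The paper's own proof keeps this leftover ratio explicit and handles it with the Bayes identity
\begin{align*}
\frac{\Pro{A_{i,t'-k+1:t'}}}{\Pro{A_{i,t'-k+1:t'}\mid L_{i,1:t'}}} = \frac{\Pro{L_{i,1:t'}}}{\Pro{L_{i,1:t'}\mid A_{i,t'-k+1:t'}}},
\end{align*}
which is the step your plan is missing. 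Without an explicit treatment of this denominator, the ``cancellations exactly analogous to \cref{lem:wtlogit}'' that you invoke do not go through for the second window.
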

\begin{proof}
	We assume that the outcome variable is discrete. The proof for continuous variable is similar.
	\begin{align*}
	&\Em{H_{i,1:t'}}{w_{i,t} w_{i,t'} (Y_{i,t} - \T(i,t,S(i,t))^2) (Y_{i,t'} - \T(i,t',S(i,t'))^2)} \\ 
	&= \sum_{A_{i,1:t'}} \sum_{L_{i,1:t'}} \sum_{Y_{i,1:t'}} \Pro{A_{i,1:t'}, L_{i,1:t'}, Y_{i,1:t'}} \times \\  &\prod_{s=t-k+1}^t \frac{\Pro{A_{i,s} | A_{i,t-k+1:s-1}}}{\Pro{A_{i,s} | A_{i,t-k+1:s-1}, L_{i,1:s} } } \times \prod_{s=t-k+1}^t \frac{\Pro{A_{i,s} | A_{i,t-k+1:s-1}}}{\Pro{A_{i,s} | A_{i,t-k+1:s-1}, L_{i,1:s} } } \times  \\
	&\left(Y_{i,t}(A_{i,t-k+1:t}, L_{i,1:t}) - \T(i,t,A_{i,t-k+1:t})\right)^2 \times \left(Y_{i,t'}(A_{i,t'-k+1:t'}, L_{i,1:t'}) - \T(i,t',A_{i,t'-k+1:t'})\right)^2 \\
	&= \sum_{A_{i,1:t'}} \sum_{L_{i,1:t'}} \sum_{Y_{i,1:t'}}\Pro{A_{i,1:t'}, L_{i,1:t'}, Y_{i,1:t'}}   \frac{\Pro{A_{i,t-k+1:t}}}{\Pro{A_{i,t-k+1:t} | L_{i,1:t}}} \times \frac{\Pro{A_{i,t'-k+1:t'}}}{\Pro{A_{i,t'-k+1:t'} | L_{i,1:t'}}} \times\\
	&\left(Y_{i,t}(A_{i,t-k+1:t}, L_{i,1:t}) - \T(i,t,A_{i,t-k+1:t})\right)^2 \times \left(Y_{i,t'}(A_{i,t'-k+1:t'}, L_{i,1:t'}) - \T(i,t',A_{i,t'-k+1:t'})\right)^2 \\
	&= \sum_{A_{i,t-k+1:t}} \sum_{A_{i,t'-k+1:t'}} \sum_{L_{i,1:t'}} \sum_{Y_{i,t}} \sum_{Y_{i,t'}} \Pro{A_{i,t-k+1:t}, A_{i,t'-k+1:t'}, L_{i,1:t'}, Y_{i,t}, Y_{i,t'}} \times \\ &\frac{\Pro{A_{i,t-k+1:t}}}{\Pro{A_{i,t-k+1:t} | L_{i,1:t}}} \times \frac{\Pro{A_{i,t'-k+1:t'}}}{\Pro{A_{i,t'-k+1:t'} | L_{i,1:t'}}} \times\\
	&\left(Y_{i,t}(A_{i,t-k+1:t}, L_{i,1:t}) - \T(i,t,A_{i,t-k+1:t})\right)^2 \times \left(Y_{i,t'}(A_{i,t'-k+1:t'}, L_{i,1:t'}) - \T(i,t',A_{i,t'-k+1:t'})\right)^2  \\
	&= \sum_{A_{i,t-k+1:t}} \sum_{L_{i,1:t}} \sum_{Y_{i,t}} \Pro{A_{i,t-k+1:t}} \Pro{L_{i,1:t}} \Pro{Y_{i,t}|A_{i,t-k+1:t},L_{i,1:t}}\times  \\
	&\left(Y_{i,t}(A_{i,t-k+1:t}, L_{i,1:t}) - \T(i,t,A_{i,t-k+1:t})\right)^2\times \\
	& \sum_{A_{i,t'-k+1:t'}} \sum_{L_{i,t+1:t'}} \sum_{Y_{i,t'}} \Pro{A_{i,t'-k+1:t'}|L_{i,1:t},A_{i,t-k+1:t}} \Pro{L_{i,t+1:t'}|A_{i,t-k+1:t}, L_{i,1:t},A_{i,t'-k+1:t'}} \times \\ &\frac{ \Pro{L_{i,1:t'}} }{\Pro{L_{i,1:t'} | A_{i,t'-k+1:t'}}} \times 
	\\& \Pro{Y_{i,t'}|A_{i,t'-k+1:t'},L_{i,1:t'}} \left(Y_{i,t'}(A_{i,t'-k+1:t'}, L_{i,1:t'}) - \T(i,t',A_{i,t'-k+1:t'})\right)^2  \\
	&= \sum_{A_{i,t-k+1:t}} \sum_{L_{i,1:t}} \sum_{Y_{i,t}} \Pro{A_{i,t-k+1:t}} \Pro{L_{i,1:t}} \Pro{Y_{i,t}|A_{i,t-k+1:t},L_{i,1:t}}\times  \\
	&\left(Y_{i,t}(A_{i,t-k+1:t}, L_{i,1:t}) - \T(i,t,A_{i,t-k+1:t})\right)^2\times \\
	& \sum_{A_{i,t'-k+1:t'}} \sum_{L_{i,t+1:t'}} \sum_{Y_{i,t'}} \Pro{A_{i,t'-k+1:t'}} \Pro{A_{i,t'-k+1:t'}|L_{i,1:t},A_{i,t-k+1:t}} \times \\ &\Pro{L_{i,t+1:t'}|A_{i,t-k+1:t}, L_{i,1:t},A_{i,t'-k+1:t'}} \times 
	\\& \Pro{Y_{i,t'}|A_{i,t'-k+1:t'},L_{i,1:t'}} \left(Y_{i,t'}(A_{i,t'-k+1:t'}, L_{i,1:t'}) - \T(i,t',A_{i,t'-k+1:t'})\right)^2  \\
	\end{align*}
\end{proof}
	\begin{lemma}\label{lem:concave-likelihood}
	$L_{N,T}(\T)$ is convex in $\T$.
\end{lemma}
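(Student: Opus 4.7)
The plan is to establish convexity directly from the definition, treating $L_{N,T}$ as a function on the ambient space $\R^{N \times T \times 2^k}$ rather than on the non-convex rank-constrained set $\Theta_{N,T}$. This ambient-space view is precisely what is needed in the proofs of Theorems \ref{thm:exact-mle} and \ref{thm:approx-mle}, where convex combinations $\lambda \tildeT_{N,T} + (1-\lambda) \T$ of rank-$r$ tensors are formed (and such combinations need not themselves have rank $\leq r$).

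First I would observe that for each fixed index triple $(i,t,p)$, the evaluation map $\T \mapsto \T(i,t,p)$ is a coordinate projection, hence a linear functional on $\R^{N \times T \times 2^k}$. Consequently, for any observed $(i,t) \in S_p$ with observed outcome $Y_{i,t}$, the map $\T \mapsto Y_{i,t} - \T(i,t,p)$ is affine in $\T$, and its square $\T \mapsto (Y_{i,t} - \T(i,t,p))^2$ is convex (as the composition of a convex function with an affine map).

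Next I would check that the weights $w_{i,t}$ are non-negative: by definition,
\[
w_{i,t} = \prod_{s=t-k+1}^t \frac{\Pro{A_{i,s} \mid A_{i,t-k+1:s-1}}}{\Pro{A_{i,s} \mid A_{i,t-k+1:s-1}, L_{i,1:t}}},
\]
which is a product of ratios of probabilities, each strictly positive under the positivity assumption. Since $L_{N,T}(\T)$ is the sum (over $(i,t,p)$ with $(i,t) \in S_p$) of non-negative multiples of these squared-affine terms, scaled by the constant $1/(NT)$, it is a non-negative linear combination of convex functions, and therefore convex.

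There is essentially no obstacle here; the lemma is immediate once the coordinate-projection structure is spelled out. The only subtlety worth flagging is the distinction between convexity of $L_{N,T}$ on the ambient tensor space and the non-convexity of the feasible set $\Theta_{N,T}$, which is exactly why the consistency arguments need to pass through convex combinations lying outside $\Theta_{N,T}$ and then compare function values.
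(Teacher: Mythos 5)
Your proof is correct and follows essentially the same route as the paper's: each summand $(Y_{i,t} - \T(i,t,S(i,t)))^2$ is convex in $\T$ because it is the square of an affine (coordinate-projection) functional, and $L_{N,T}$ is a non-negative weighted sum of such terms, hence convex. Your additional remarks --- verifying non-negativity of the weights via positivity, and flagging that convexity holds on the ambient space $\R^{N \times T \times 2^k}$ rather than on the non-convex set $\Theta_{N,T}$ --- merely make explicit what the paper leaves implicit.
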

\begin{proof}
	\begin{align*}
	L_{N,T}(\T) &= \frac{1}{NT} \sum_{i=1}^N \sum_{t=1}^T w_{i,t} (Y_{i,t} - \T(i,t,S(i,t)))^2 
	\end{align*}
	Each term inside the summation i.e. $(Y_{i,t} - \T_{N,T}(i,t,S(i,t)))^2$ is a convex function. The likelihood
	function is a non-negative weighted sum of convex functions and is also convex.
\end{proof}

\begin{lemma}\label{lem:gap-likelihood}
	Let $\mathcal{N}$ be a $d$-neighborhood of $\T^\star_{N,T}$ i.e. $\mathcal{N} = \set{\T : \norm{\T^\star_{N,T} - \T}_2/\sqrt{NT} \leq d}$. Then
	for any $\T' \notin \mathcal{N}$ we have $\ell^*_{N,T}(\T^\star_{N,T}) < \ell^*_{N,T}(\T') - \delta^{2k}  d^2$.
\end{lemma}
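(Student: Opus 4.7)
The plan is to show that $\T^\star_{N,T}$ is in fact the minimizer of $\ell^*_{N,T}(\cdot)$, and that the objective grows at least quadratically in $\norm{\T - \T^\star_{N,T}}_2$ as we move away from it, with a coefficient that is at least $\delta^{2k}/(NT)$. Then points outside $\mathcal{N}$, which satisfy $\norm{\T' - \T^\star_{N,T}}_2^2 > NT d^2$, will give the claimed gap.

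First, I would apply Lemma \ref{lem:wtlogit} term-by-term in the sum defining $\ell^*_{N,T}(\T) = \Em{\mathcal{P},\T^\star_{N,T}}{L_{N,T}(\T)}$ to write
\begin{align*}
\ell^*_{N,T}(\T) = \frac{1}{NT} \sum_{i,t} \sum_a \Pro{A_{i,t-k+1:t}=a}^2 \sum_\ell \Pro{\ell \mid a} \sum_y \Pro{y \mid a,\ell} \bigl(y - \T(i,t,a)\bigr)^2.
\end{align*}
The key observation is that by the outcome model \eqref{eq:outcome-model}, $\T^\star_{N,T}(i,t,a) = \E[Y_{i,t}(a)] = \sum_\ell \Pro{\ell\mid a} \sum_y \Pro{y\mid a,\ell} y$, that is, $\T^\star_{N,T}(i,t,a)$ is exactly the conditional mean of $Y$ given the history $a$, averaged over $\ell$. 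So I would expand $(y - \T(i,t,a))^2 = (y - \T^\star_{N,T}(i,t,a))^2 + 2(y - \T^\star_{N,T}(i,t,a))(\T^\star_{N,T}(i,t,a) - \T(i,t,a)) + (\T^\star_{N,T}(i,t,a) - \T(i,t,a))^2$ and observe that once we sum over $y$ weighted by $\Pro{y\mid a,\ell}$ and then over $\ell$ weighted by $\Pro{\ell\mid a}$, the cross term vanishes because $\E[Y_{i,t} \mid a] = \T^\star_{N,T}(i,t,a)$.

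This yields the clean identity
\begin{align*}
\ell^*_{N,T}(\T) - \ell^*_{N,T}(\T^\star_{N,T}) = \frac{1}{NT} \sum_{i,t} \sum_a \Pro{A_{i,t-k+1:t} = a}^2 \bigl(\T(i,t,a) - \T^\star_{N,T}(i,t,a)\bigr)^2.
\end{align*}
Next I would lower bound the marginal probability: writing $\Pro{A_{i,t-k+1:t}=a} = \sum_{A_{i,1:t-k},L_{i,1:t}} \Pro{A_{i,1:t-k}, L_{i,1:t}} \prod_{s=t-k+1}^{t} \Pro{A_{i,s} = a_s \mid A_{i,1:s-1}, L_{i,1:s}}$ and invoking positivity on each conditional factor, we obtain $\Pro{A_{i,t-k+1:t}=a} > \delta^k$, hence $\Pro{A_{i,t-k+1:t}=a}^2 > \delta^{2k}$.

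Plugging this lower bound into the identity above gives
\begin{align*}
\ell^*_{N,T}(\T) - \ell^*_{N,T}(\T^\star_{N,T}) > \frac{\delta^{2k}}{NT} \sum_{i,t,a} \bigl(\T(i,t,a) - \T^\star_{N,T}(i,t,a)\bigr)^2 = \frac{\delta^{2k}}{NT} \norm{\T - \T^\star_{N,T}}_2^2.
\end{align*}
For $\T' \notin \mathcal{N}$, by definition $\norm{\T' - \T^\star_{N,T}}_2^2 > NT d^2$, which gives $\ell^*_{N,T}(\T') - \ell^*_{N,T}(\T^\star_{N,T}) > \delta^{2k} d^2$, as desired. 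The only non-routine step is the bias-variance-style cancellation that identifies $\T^\star_{N,T}$ as the minimizer, and here the crucial ingredient is that Lemma \ref{lem:wtlogit} has already reduced the expectation over the full (weighted) data-generating process to a simple conditional expectation against $\Pro{y \mid a, \ell}$, so the cross term really does integrate to zero without any residual dependence on the propensity weights.
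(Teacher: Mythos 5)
Your proof is correct and follows essentially the same route as the paper's: both reduce via Lemma~\ref{lem:wtlogit}, identify $\T^\star_{N,T}$ as the conditional mean so that the cross term cancels (the paper does this by substituting the definition of $\T^\star_{N,T}$ after expanding the difference of squares, which is the same algebra as your bias--variance decomposition around $\T^\star_{N,T}$), lower-bound $\Pro{A_{i,t-k+1:t}}^2 \geq \delta^{2k}$ via positivity, and conclude from $\norm{\T' - \T^\star_{N,T}}_2^2 > NT d^2$. Your explicit derivation of the $\delta^k$ bound on the marginal treatment probability is a detail the paper leaves implicit (``the final step uses positivity''), not a genuinely different argument.
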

\begin{proof}
	Lemma \ref{lem:wtlogit} gives us
	\begin{align*}
	&\ell^*_{N,T}(\T^\star_{N,T}) - \ell^*_{N,T}(\T')  \\ &= \frac{1}{NT} \sum_{i,t} \sum_{A_{i,t-k+1:t}} \Pro{A_{i,t-k+1:t}}^2 \sum_{L_{i,1:t}}  \Pro{L_{i,1:t}| A_{i,t-k+1:t}}\sum_{Y_{i,t}}  \Pro{Y_{i,t}| A_{i,t-k+1:t}, L_{i,1:t}} \times \\
	&\left(Y_{i,t}(A_{i,t-k+1:t}, L_{i,1:t}) - \T^\star_{N,T}(i,t,A_{i,t-k+1:t})\right)^2 - \left(Y_{i,t}(A_{i,t-k+1:t}, L_{i,1:t}) - \T'(i,t,A_{i,t-k+1:t})\right)^2 \\
	&= \sum_{A_{i,t-k+1:t}} \Pro{A_{i,t-k+1:t}}^2 \sum_{L_{i,1:t}}  \Pro{L_{i,1:t}| A_{i,t-k+1:t}}\sum_{Y_{i,t}}  \Pro{Y_{i,t}| A_{i,t-k+1:t}, L_{i,1:t}} \times \\
	& \left\{ -2 Y_{i,t}(A_{i,t-k+1:t}, L_{i,1:t}) (\T^\star_{N,T}(i,t,A_{i,t-k+1:t}) - \T'(i,t,A_{i,t-k+1:t})) \right. \\  &+ \left. (\T^\star_{N,T}(i,t,A_{i,t-k+1:t}))^2 - (\T'(i,t,A_{i,t-k+1:t}))^2  \right\}
	\end{align*}
	Now we use the definition of $\T^\star_{N,T}$ i.e.
	$$\T^\star_{N,T}(i,t,A_{i,t-k+1:t}) = \Pro{L_{i,1:t}| A_{i,t-k+1:t}}\sum_{Y_{i,t}}  \Pro{Y_{i,t}| A_{i,t-k+1:t}, L_{i,1:t}} Y_{i,t}(A_{i,t-k+1:t}, L_{i,1:t})$$
	and get 
	\begin{align*}
	&\ell^*_{N,T}(\T^\star_{N,T}) - \ell^*_{N,T}(\T')  = \frac{1}{NT} \sum_{i,t}  \sum_{A_{i,t-k+1:t}} \Pro{A_{i,t-k+1:t}}^2 \times \\
	& \left\{ -2 \T^\star_{N,T}(i,t,A_{i,t-k+1:t}) (\T^\star_{N,T}(i,t,A_{i,t-k+1:t}) - \T'(i,t,A_{i,t-k+1:t})) \right. \\  &+ \left. (\T^\star_{N,T}(i,t,A_{i,t-k+1:t}))^2 - (\T'(i,t,A_{i,t-k+1:t}))^2  \right\} \\
	&= - \frac{1}{NT} \sum_{i,t}  \sum_{A_{i,t-k+1:t}} \Pro{A_{i,t-k+1:t}}^2 \left(\T^\star_{N,T}(i,t,A_{i,t-k+1:t}) -  \T'(i,t,A_{i,t-k+1:t})\right)^2 \\
	&\leq - \delta^{2k} \frac{1}{NT} \norm{\T^\star_{N,T} - \T'}_2^2 \leq - \delta^{2k} d^2
	\end{align*}
	The final step uses positivity and the fact that $\norm{\T^\star_{N,T} - \T'}_2/\sqrt{NT} > d$.
\end{proof}

	\begin{lemma}\label{lem:hoeffding}
		Suppose $\norm{\T}_{\infty} \le L$. Then
	$$\Pro{\abs{\frac{1}{NT}\sum_{p=1}^{2^k} \sum_{(i,t) \in S_p} w_{i,t} \T^2(i,t,p) - \frac{1}{NT}\norm{\T}_W^2} \geq \eps} \leq 2 \exp\left( - \frac{2N \eps^2}{L^4\left(\frac{1-\delta}{\delta}\right)^{2k}}\right)$$
\end{lemma}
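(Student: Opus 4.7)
The plan is to recognize this as a Hoeffding-type concentration bound for a sum of independent per-subject averages, and to verify that the mean of that sum equals $\frac{1}{T}\norm{\T}_W^2$ via the same IPW calculation that underlies Lemma~\ref{lem:wtlogit}.

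First I would rewrite the inner double sum in a subject-indexed form: since every observation $(i,t)$ falls into exactly one slice $p = S(i,t) := A_{i,t-k+1:t}$, we have
\begin{equation*}
\frac{1}{NT}\sum_{p=1}^{2^k}\sum_{(i,t)\in S_p} w_{i,t}\,\T^2(i,t,p)
\;=\;\frac{1}{N}\sum_{i=1}^N Z_i,\qquad Z_i := \frac{1}{T}\sum_{t=1}^T w_{i,t}\,\T^2(i,t,S(i,t)).
\end{equation*}
Since subjects are independent, the $Z_i$'s are independent. By Lemma~\ref{lem:wt-bound}, $w_{i,t}\le ((1-\delta)/\delta)^k$, and by hypothesis $\T^2(i,t,p)\le L^2$, so each $Z_i$ lies in $[0,M]$ with $M = L^2 ((1-\delta)/\delta)^k$.

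Next I would verify that $\frac{1}{N}\sum_i \E{Z_i} = \frac{1}{NT}\norm{\T}_W^2$. Following the exact marginalization step used in Lemma~\ref{lem:wtlogit} but dropping the outcome factor, the weighted expectation telescopes to the marginal distribution of $A_{i,t-k+1:t}$:
\begin{equation*}
\Em{H_{i,1:t}}{w_{i,t}\,\T^2(i,t,A_{i,t-k+1:t})}
\;=\;\sum_{p} \Pro{A_{i,t-k+1:t}=p}\,\T^2(i,t,p).
\end{equation*}
Since $W^2(i,t,p) = \Pro{(i,t)\in S_p} = \Pro{A_{i,t-k+1:t}=p}$, summing over $i$ and $t$ gives $\sum_i \E{Z_i} = \frac{1}{T}\sum_{i,t,p} W^2(i,t,p)\T^2(i,t,p) = \frac{1}{T}\norm{\T}_W^2$, as required.

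Finally, I would invoke Hoeffding's inequality for independent random variables supported in $[0,M]$: writing $\bar{Z} = \frac{1}{N}\sum_i Z_i$,
\begin{equation*}
\Pro{\abs{\bar{Z} - \E{\bar{Z}}}\ge \eps}
\;\le\; 2\exp\!\left(-\frac{2N^2\eps^2}{\sum_{i=1}^N M^2}\right)
\;=\; 2\exp\!\left(-\frac{2N\eps^2}{L^4((1-\delta)/\delta)^{2k}}\right),
\end{equation*}
which is exactly the stated bound. There is no real obstacle here: the only points requiring care are (a) that the sharper one-sided Hoeffding constant relies on $Z_i\ge 0$ (otherwise one would pick up an extra factor of $4$), and (b) that cross-subject independence is what we use to bypass any temporal dependence between the $w_{i,t}\T^2(i,t,S(i,t))$ terms inside a single agent. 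Both are immediate from the model.
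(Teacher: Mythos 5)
Your proof is correct and follows essentially the same route as the paper's: rewrite the double sum as an average of per-subject terms $Z_i = \frac{1}{T}\sum_t w_{i,t}\T^2(i,t,S(i,t))$, bound each $Z_i$ in $\left[0, L^2\left((1-\delta)/\delta\right)^k\right]$ via Lemma~\ref{lem:wt-bound} and $\norm{\T}_\infty \le L$, and apply Hoeffding's inequality across the independent subjects. The only difference is that you explicitly verify $\E{\bar{Z}} = \frac{1}{NT}\norm{\T}_W^2$ via the IPW marginalization, a step the paper's proof leaves implicit; this is a welcome addition, not a deviation.
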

\begin{proof}
	Suppose $S(i,t)$ be the slice selected by the policy for agent $i$ at time $t$. Then we have $\frac{1}{NT}\sum_{p=1}^{2^k} \sum_{(i,t) \in S_p} w_{i,t} \T^2(i,t,p) = \frac{1}{NT} \sum_{i=1}^N \sum_{t=1}^T w_{i,t} \T^2(i,t,S(i,t))$.
	Observe that for each $i$, $$\frac{1}{T} \sum_{t=1}^T w_{i,t} \T^2(i,t,S(i,t)) \in \left[0,L^2\left(\frac{1-\delta}{\delta}\right)^k \right]$$. Now we apply the Hoeffding inequality considering the
	random variables $\left\{ \frac{1}{T} \sum_{t=1}^T w_{i,t} \T^2(i,t,S(i,t)) \right\}_{i=1}^N$ as independent random 
	variables and get the following bound.
	\begin{equation}
	\Pro{\abs{\frac{1}{NT}\sum_{p=1}^B \sum_{(i,t) \in S_p} w_{i,t} \T^2(i,t,p) - \frac{1}{NT}\norm{\T}_W^2} \geq \eps} \leq 2 \exp\left( - \frac{2N \eps^2}{L^4\left(\frac{1-\delta}{\delta}\right)^{2k}}\right)
	\end{equation}
\end{proof}

 \begin{lemma}\label{lem:chebyshev-time}
 Suppose $\norm{\T}_{\infty} \le L$ and \ref{asn:decay} holds. Then
                $$\Pro{\abs{\frac{1}{NT}\sum_{p=1}^B \sum_{(i,t) \in O_p} w_{i,t} \T^2(i,t,p) - \frac{1}{NT}\norm{\T}_W^2} \geq \eps} \leq  \bigo{\frac{ L^4 \left( \frac{1-\delta}{\delta}\right)^{2k}}{\eps^2 T^{\gamma}}}$$
        \end{lemma}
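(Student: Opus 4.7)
The plan is to apply Chebyshev's inequality to the empirical sum and then control its variance, exploiting independence across subjects and the decaying-covariance assumption \ref{asn:decay} across time within each subject.

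Let $X = \frac{1}{NT}\sum_{i,t} w_{i,t}\T^2(i,t,S(i,t))$, where $S(i,t)$ is the length-$k$ treatment history realized at $(i,t)$; this is exactly the left-hand quantity, and a computation analogous to Lemma \ref{lem:wtlogit} gives $\Em{}{X} = \frac{1}{NT}\norm{\T}_W^2$. Then Chebyshev reduces the problem to bounding $\var(X)$, so the remaining task is purely variance analysis. Trajectories of different subjects are independent under the sampling policy, hence $\var(X) = \frac{1}{N^2 T^2}\sum_{i=1}^N \var\bigl(\sum_{t=1}^T w_{i,t}\T^2(i,t,S(i,t))\bigr)$, and I can focus on one subject at a time.

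Next I would expand the per-subject variance as diagonal variances plus twice the sum of covariances over $t<t'$. By Lemma \ref{lem:wt-bound} and $\norm{\T}_\infty\le L$, every summand lies in $[0,L^2((1-\delta)/\delta)^k]$, so the $T$ diagonal terms contribute $\bigo{T\,L^4((1-\delta)/\delta)^{2k}}$. I would then partition the $\binom{T}{2}$ covariance terms into \textbf{near pairs} with $|t'-t|\le k$ and \textbf{far pairs} with $|t'-t|>k$. The near pairs, of which there are $\bigo{Tk}$, are bounded trivially by the product of the uniform bounds, giving $\bigo{Tk\,L^4((1-\delta)/\delta)^{2k}}$. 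For the far pairs I would invoke \ref{asn:decay}: writing the covariance by the tower property conditional on the time-$t$ history and using the bound $(1-\varepsilon)\Pm(\tilde a_{i,t'-k+1:t'})\le \Pm(\tilde a_{i,t'-k+1:t'}\mid\ell_{i,1:t},a_{i,t-k+1:t})\le(1+\varepsilon)\Pm(\tilde a_{i,t'-k+1:t'})$ with $\varepsilon = \bigo{(t'-t)^{-(1-\gamma)}}$ in the direction that makes it decaying, the covariance of the two bounded, non-negative functions of the near-future histories at $t$ and $t'$ is bounded by $L^4((1-\delta)/\delta)^{2k}\cdot \bigo{(t'-t)^{-(1-\gamma)}}$, exactly as was done implicitly for Lemma \ref{lem:variance-bound}. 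Summing over far pairs yields $\sum_{t}\sum_{t'>t+k}(t'-t)^{-(1-\gamma)} = \bigo{T^{2-\gamma}}$, so the far-pair contribution is $\bigo{T^{2-\gamma} L^4((1-\delta)/\delta)^{2k}}$, which dominates the diagonal and near contributions.

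Combining, $\var(X) \le \frac{1}{N^2T^2}\cdot N\cdot \bigo{T^{2-\gamma} L^4((1-\delta)/\delta)^{2k}} = \bigo{\tfrac{L^4((1-\delta)/\delta)^{2k}}{N T^{\gamma}}}$, and Chebyshev gives the claimed bound (the $N$ in the denominator is absorbed into the $\bigo{\cdot}$ as the target regime is fixed $N$ with $T\to\infty$). The main obstacle is the last step of the previous paragraph: turning the multiplicative ratio bound supplied by \ref{asn:decay} into an additive covariance bound for the functions $w_{i,t}\T^2(i,t,S(i,t))$. The cleanest way to do this is to write the joint expectation via conditioning on $(L_{i,1:t},A_{i,t-k+1:t})$ using the consistency and positivity assumptions, separate the marginal from the conditional factor, and bound their difference by $\varepsilon$ times the product of the uniform upper bounds on the two factors; everything else reduces to a standard application of Chebyshev and book-keeping of the decay exponent.
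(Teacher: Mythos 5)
Your proposal follows the paper's own proof essentially step for step: apply Chebyshev's inequality, decompose the variance into diagonal terms, near pairs ($t'-t\le k$) and far pairs ($t'-t>k$), bound the first two groups term-by-term by $L^4\left(\tfrac{1-\delta}{\delta}\right)^{2k}$ using Lemma~\ref{lem:wt-bound} and $\norm{\T}_\infty\le L$, and handle the far pairs by converting the multiplicative bound of Assumption~\ref{asn:decay} into an additive covariance bound, exactly as the paper does via Lemma~\ref{lem:covariance-bound}; decomposing by subject first through independence, rather than aggregating over subjects at each time as the paper does, is an immaterial reshuffling. One caveat, which you share with the paper: the far-pair contribution is a sum over \emph{pairs}, so with your (sensibly corrected, decaying) exponent it is of order $T\cdot\sum_{d>k}d^{-(1-\gamma)}=\Theta(T^{1+\gamma})$ rather than the $\bigo{T^{2-\gamma}}$ you assert, which yields the lemma's stated $\bigo{\eps^{-2}T^{-\gamma}}$ bound only when $\gamma\le 1/2$ (otherwise you get the weaker $\bigo{\eps^{-2}T^{-(1-\gamma)}}$); the paper's own proof elides the same point by summing over lags $d=t'-t$ alone while ignoring the roughly $T$ pairs at each lag, so your attempt is faithful to the paper, looseness included.
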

        \begin{proof}
                Suppose $S(i,t)$ be the slice selected by the policy for agent $i$ at time $t$. Then we have $\frac{1}{NT}\sum_{p=1}^B \sum_{(i,t) \in O_p} w_{i,t} \T^2(i,t,p) = \frac{1}{NT} \sum_{i=1}^N \sum_{t=1}^T w_{i,t} \T^2(i,t,S(i,t))$.
                Observe that for each $t$, $$\frac{1}{N} \sum_{i=1}^N w_{i,t} \T^2(i,t,S(i,t)) \in \left[0,L^2\left(\frac{1-\delta}{\delta}\right)^k \right]$$. Now we apply the Chebyshev inequality considering the
                random variables $\left\{ \frac{1}{N} \sum_{i=1}^N w_{i,t} \T^2(i,t,S(i,t)) \right\}_{t=1}^T$.
                \begin{align*}
                &\Pro{\abs{\frac{1}{NT}\sum_{p=1}^B \sum_{(i,t) \in O_p} w_{i,t} \T^2(i,t,p) - \frac{1}{NT}\norm{\T}_W^2} \geq \eps} \leq \frac{\var{\frac{1}{NT} 
                                \sum_{t=1}^T \sum_{i=1}^N w_{i,t} \T^2(i,t,S(i,t))}}{\eps^2} \\
                &= \frac{1}{\eps^2 T^2} \left\{ \sum_{t=1}^T \var{\frac{1}{N}\sum_{i=1}^N w_{i,t} \T^2(i,t,S(i,t))} +\right.\\& \left.2 \sum_{t' - t < k} \cov\left(\frac{1}{N}\sum_{i=1}^N w_{i,t} \T^2(i,t,S(i,t)), \frac{1}{N}\sum_{i=1}^N w_{i,t'} \T^2(i,t',S(i,t'))\right) \right.\\
                &\left. + 2 \sum_{t' - t \geq k} \cov\left(\frac{1}{N}\sum_{i=1}^N w_{i,t} \T^2(i,t,S(i,t)), \frac{1}{N}\sum_{i=1}^N w_{i,t'} \T^2(i,t',S(i,t')) \right)\right\}\\
                &\leq \frac{1}{\eps^2 T^2} \left\{ T L^4 \left( \frac{1-\delta}{\delta}\right)^{2k} + 2Tk L^4 \left( \frac{1-\delta}{\delta}\right)^{2k} + 2 c \sum_{t' - t \geq k} L^4 \left( \frac{1-\delta}{\delta}\right)^{2k} (t'-t)^{1-\gamma}\right\} \\
                &\leq \frac{1}{\eps^2 T^2} \left\{ T L^4 \left( \frac{1-\delta}{\delta}\right)^{2k} + 2Tk L^4 \left( \frac{1-\delta}{\delta}\right)^{2k} + 2 T^{2-\gamma} L^4 \left( \frac{1-\delta}{\delta}\right)^{2k}\right\} = \bigo{\frac{ L^4 \left( \frac{1-\delta}{\delta}\right)^{2k}}{\eps^2 T^{\gamma} }}
                \end{align*}
        \end{proof}

\begin{lemma}\label{lem:approx-optim}
\begin{itemize}
	\item If $k = \bigo{\log_{(1-\delta)/\delta} N}$, then  $\frac{1}{NT}\sum_{p = 1}^{2^k} \sum_{(i,t) \in S_p} w_{i,t} \left(Y_{i,t} - \checkT_{N,T}(i,t,p)\right)^2 \leq  \opt + \bigo{\frac{L^2}{N^{5/4}} }$ with  probability at least $1 - \exp\left( -N^{1/4}\right)$.
	\item If $k = \bigo{\log_{(1-\delta)/\delta} T}$ and \ref{asn:decay} holds, then $\frac{1}{NT}\sum_{p = 1}^{2^k} \sum_{(i,t) \in S_p} w_{i,t} \left(Y_{i,t} - \checkT_{N,T}(i,t,p)\right)^2 \leq  \opt + \bigo{\frac{L^2}{T^{7/8}} }$ with  probability at least $1 - T^{-\gamma}$.
\end{itemize}
\end{lemma}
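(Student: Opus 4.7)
The approach is to relate the two objectives algebraically and then invoke the concentration bounds of Lemmas~\ref{lem:hoeffding} and~\ref{lem:chebyshev-time}. Write $\mathrm{App}(\T):=\tfrac{1}{NT}\norm{Y_w-\T}_W^2$. Expanding the square and using the identity $W^2(i,t,p)\,Y_w(i,t,p) = \mathbf{1}\{(i,t)\in S_p\}\,w_{i,t}\,Y_{i,t}$, the linear-in-$\T$ parts of $\mathrm{App}(\T)$ and $L_{N,T}(\T)$ turn out to be \emph{identical} (both equal $-\tfrac{2}{NT}\sum_p\sum_{(i,t)\in S_p} w_{i,t} Y_{i,t}\,\T(i,t,p)$). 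Therefore
\[
\mathrm{App}(\T) - L_{N,T}(\T) \;=\; C \;+\; \Delta(\T),
\]
where $C$ depends on the data but is independent of $\T$, and
\[
\Delta(\T) \;=\; \frac{1}{NT}\norm{\T}_W^2 \;-\; \frac{1}{NT}\sum_{p=1}^{2^k}\sum_{(i,t)\in S_p} w_{i,t}\,\T^2(i,t,p)
\]
is precisely the deviation controlled by Lemmas~\ref{lem:hoeffding} and~\ref{lem:chebyshev-time}.

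Let $\T^{\mathrm{opt}}$ be any minimizer of $L_{N,T}$ over the feasible set $\set{\T:\rank(\T)\le r,\,\norm{\T}_{\star}\le L}$, so that $L_{N,T}(\T^{\mathrm{opt}})=\opt$. Using the optimality $\mathrm{App}(\checkT_{N,T})\le \mathrm{App}(\T^{\mathrm{opt}})$ and cancelling the $\T$-free constant $C$,
\[
L_{N,T}(\checkT_{N,T}) \;\le\; \opt \;+\; \abs{\Delta(\T^{\mathrm{opt}})} \;+\; \abs{\Delta(\checkT_{N,T})}.
\]
It thus suffices to bound $\abs{\Delta(\cdot)}$ with high probability at the two (random) tensors $\T^{\mathrm{opt}}$ and $\checkT_{N,T}$. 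For the first bullet ($T$ fixed, $N\to\infty$), I would invoke Lemma~\ref{lem:hoeffding} with $\eps$ chosen so that the exponent $\tfrac{2N\eps^2}{L^4((1-\delta)/\delta)^{2k}}$ is at least $N^{1/4}$; under $k=\bigo{\log_{(1-\delta)/\delta}N}$ with sufficiently small implicit constant, this simultaneously yields the tail $\exp(-N^{1/4})$ and the target error $\bigo{L^2/N^{5/4}}$. The second bullet is strictly parallel, substituting the Chebyshev-type bound of Lemma~\ref{lem:chebyshev-time} (which uses~\ref{asn:decay} to tame long-range temporal covariances) for Hoeffding, and tuning $\eps$ against the $T^{-\gamma}$ failure rate.

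The main technical obstacle is that $\T^{\mathrm{opt}}$ and $\checkT_{N,T}$ are data-dependent, whereas Lemmas~\ref{lem:hoeffding} and~\ref{lem:chebyshev-time} bound $\abs{\Delta(\T)}$ only at a deterministic $\T$. I would handle this by a standard $\eps$-net argument over the compact feasible set: rank-$r$ tensors with bounded spectral norm admit an entrywise $\eps$-cover of size at most $(CrL/\eps)^{\bigo{r(N+T+2^k)}}$, and $\Delta$ is Lipschitz in $\T$ with constant polynomial in $L$ and $((1-\delta)/\delta)^k$. Because $k$ is at most logarithmic in $\max(N,T)$, $2^k$ is polynomial, so the log-cover is polynomial and the union-bound overhead is absorbed into the subexponential tail $\exp(-N^{1/4})$ (respectively the $T^{-\gamma}$ tail) at the cost of slightly shrinking the implicit constant in the $k$ bound. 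Everything else is bookkeeping.
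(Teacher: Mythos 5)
Your core reduction is exactly the paper's: expand the squares, observe that the cross terms of the two objectives coincide because $W^2(i,t,p)\,Y_w(i,t,p)=\mathbf{1}\{(i,t)\in S_p\}\,w_{i,t}Y_{i,t}$, invoke optimality of $\checkT_{N,T}$ for the approximation objective, and reduce everything to the deviations $\Delta(\cdot)$ at $\checkT_{N,T}$ and at the completion optimizer ($\hatT_{N,T}$ in the paper's notation). Your organization is in fact tidier than the paper's, since the $\T$-free constant $C$ cancels and you never need concentration for the $Y$-terms, whereas the paper spends two extra applications of Lemma~\ref{lem:hoeffding} on them.

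The gap is in your last step. First, note that the paper does \emph{not} do uniform convergence: it simply applies Lemmas~\ref{lem:hoeffding} and~\ref{lem:chebyshev-time} pointwise at the data-dependent tensors $\checkT_{N,T}$ and $\hatT_{N,T}$ (the data-dependence issue you flag is real, but the paper ignores it rather than fixes it). Second, your proposed repair --- an entrywise $\eps$-net plus union bound --- cannot close with the concentration rates these lemmas supply. By your own estimate the log-cover size is $\bigo{r(N+T+2^k)\log(CrL/\eps)}$, hence at least $\Omega(rN)$ (respectively $\Omega(rT)$). But the exponent in Lemma~\ref{lem:hoeffding} is $2N\eps^2/\left(L^4\left(\tfrac{1-\delta}{\delta}\right)^{2k}\right)$, and with $k=\tfrac18\log_{(1-\delta)/\delta}N$ and the vanishing accuracy $\eps$ needed for the stated $\bigo{L^2/N^{5/4}}$ error, this exponent is only $\Theta(N^{1/4})$ --- it cannot absorb a union-bound factor of $e^{\Omega(rN\log N)}$. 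Making the union bound work would force $\eps \geq c\,L^2\left(\tfrac{1-\delta}{\delta}\right)^{k}\sqrt{r\log N}$, which does not vanish (it grows with $N$), rendering the additive error bound vacuous. The second bullet is worse still: Lemma~\ref{lem:chebyshev-time} is a Chebyshev bound with a polynomial tail $\bigo{1/(\eps^2 T^{\gamma})}$, which no union bound over an exponentially large net can survive. So, as written, the proposal does not establish the lemma; to recover the paper's statement you must either apply the concentration lemmas pointwise at $\checkT_{N,T}$ and $\hatT_{N,T}$ exactly as the paper does (accepting the looseness of that step), or design a uniform-convergence argument that exploits the specific structure of $\Delta$ on the rank-$r$, spectrally bounded set rather than brute-force covering.
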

\begin{proof}
	Lemma \ref{lem:hoeffding} proves the following result: $$\Pro{ \sum_{p = 1}^{2^k} \sum_{(i,t) \in S_p} w_{i,t} \T(i,t,p)^2 \notin [\norm{\T}^2_W - \eps, \norm{\T}^2_W + \eps]} \leq \bigo{\exp\left( - \frac{2N \eps^2}{L^4\left(\frac{1-\delta}{\delta}\right)^{2k}}\right)}. $$
	Suppose ${\hatT}_{N,T}$ solves \ref{eq:wt-least-square} and $\checkT_{N,T}$ solves \ref{eq:new-objective}, then we get the following bound with probability at least $1 - \exp\left( - \frac{2N \eps^2}{L^4\left(\frac{1-\delta}{\delta}\right)^{2k}}\right)$: 
	\begin{align*}
	&\frac{1}{NT}\sum_{p = 1}^{2^k} \sum_{(i,t) \in S_p} w_{i,t} \left(Y_{i,t} - \checkT_{N,T}(i,t,p)\right)^2 \\
	&= \frac{1}{NT}\sum_{p=1}^{2^k} \sum_{(i,t) \in S_p} w_{i,t} Y^2_{i,t} - \frac{2}{NT} \sum_{p=1}^{2^k} \sum_{(i,t) \in S_p} w_{i,t} Y_{i,t} \checkT_{N,T}(i,t,p) + 
	\frac{1}{NT} \sum_{p=1}^{2^k} \sum_{(i,t) \in S_p} w_{i,t} \left( \checkT_{N,T}(i,t,p)\right)^2\\
	&\leq \frac{1}{NT} \left(\norm{Y_w}_W^2 + \eps \right)  - 2 \sum_{i,t,p} W(i,t,p) Y_w(i,t,p) \checkT_{N,T}(i,t,p) + \frac{1}{NT} \left(\norm{\checkT_{N,T}}^2_W + \eps \right) \\
	&= \frac{1}{NT} \norm{Y_w - \checkT_{N,T}}^2_W + \frac{2\eps}{NT} \\
	&\leq \frac{1}{NT} \norm{Y_w - {\hatT}_{N,T}}^2_W +  \frac{2\eps}{NT} \\
	&= \frac{1}{NT} \left[ \norm{Y_w}_W^2 - 2 \sum_{i,t,p} W(i,t,p) Y_w(i,t,p) \hatT_{N,T}(i,t,p) + \norm{\hatT_{N,T}}_W^2\right] + \frac{2\eps}{NT} \\
	&\leq \frac{1}{NT} \sum_{p = 1}\sum_{(i,t) \in S_p} w(i,t)\left(Y_{i,t} - \hatT_{N,T}(i,t,p)\right)^2 + \frac{4\eps}{NT} \\
	\end{align*}
	The first and the third inequality use lemma \ref{lem:hoeffding} and the second inequality uses the fact that $\checkT_{N,T}$ is the optimal solution to \ref{eq:new-objective}. Now if we
	substitute $k = 1/8 \log_{(1-\delta)/\delta} N$ and $\eps = \bigo{\frac{L^2}{N^{1/4}} }$, we get the first result.

	We now consider the case when $N$ is fixed and $T$ increases to infinity. Suppose \ref{asn:decay} holds and $k = 1/8 \log_{(1-\delta)/\delta} T$. Then lemma \ref{lem:chebyshev-time} gives 	$\Pro{\abs{\frac{1}{NT}\sum_{p=1}^B \sum_{(i,t) \in O_p} w_{i,t} \T^2(i,t,p) - \frac{1}{NT}\norm{\T}_W^2} \geq \eps} \leq  \bigo{\frac{ L^4 \left( \frac{1-\delta}{\delta}\right)^{2k}}{\eps^2 T^{\gamma}}}$. We now proceed as before but substitute $\eps = L^2 T^{1/8}$ in the end.
\end{proof}
	\begin{lemma}\label{lem:covariance-bound}
		Suppose $t' > t + k$ and assumption \ref{asn:decay} holds. Then the following is true.
		\[
		 \cov\left(w_{i,t}\log\Pro{Y_{i,t}|\T_{N,T}}, w_{i,t'}\log\Pro{Y_{i,t'}|\T_{N,T}}\right)  \leq \bigo{ (t'-t)^{1-\gamma} }
		\]
	\end{lemma}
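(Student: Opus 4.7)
The plan is to bound the covariance by first expressing it as $\mathrm{E}[XY] - \mathrm{E}[X]\mathrm{E}[Y]$, where $X = w_{i,t}\log\Pro{Y_{i,t}\mid\T_{N,T}}$ and $Y = w_{i,t'}\log\Pro{Y_{i,t'}\mid\T_{N,T}}$, and then to exploit the separation $t' - t > k$ so that the length-$k$ treatment windows defining $w_{i,t}, Y_{i,t}$ and $w_{i,t'}, Y_{i,t'}$ are disjoint.

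First I would unfold the joint expectation $\mathrm{E}[XY]$ by summing over all histories $A_{i,1:t'}, L_{i,1:t'}, Y_{i,t}, Y_{i,t'}$, in the spirit of Lemma~\ref{lem:wtwptlogit}. After canceling the denominators of the weights against the conditional treatment probabilities (exactly as in Lemma~\ref{lem:wtlogit}), the joint expectation factors into a sum over the time-$t$ block $(A_{i,t-k+1:t}, L_{i,1:t}, Y_{i,t})$ multiplied by a sum over the time-$t'$ block $(A_{i,t'-k+1:t'}, L_{i,t+1:t'}, Y_{i,t'})$, where the two blocks are coupled only through the conditional probability $\Pro{A_{i,t'-k+1:t'}\mid A_{i,t-k+1:t}, L_{i,1:t}}$ and through $\Pro{L_{i,t+1:t'}\mid A_{i,t-k+1:t}, L_{i,1:t}, A_{i,t'-k+1:t'}}$.

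The decoupling step is where Assumption~\ref{asn:decay} enters. By~\eqref{eq:decaying-covariance}, the conditional treatment probability differs from its marginal by a multiplicative factor in $[1-\varepsilon, 1+\varepsilon]$ with $\varepsilon = O((t'-t)^{1-\gamma})$. Substituting $\Pro{A_{i,t'-k+1:t'}\mid A_{i,t-k+1:t}, L_{i,1:t}} = (1 + O(\varepsilon))\,\Pro{A_{i,t'-k+1:t'}}$ and then marginalizing out the intermediate $L_{i,t+1:t'}$ against the remaining history lets me split $\mathrm{E}[XY]$ as $\mathrm{E}[X]\mathrm{E}[Y]\,(1 + O(\varepsilon))$ plus an additive remainder bounded by $O(\varepsilon)\cdot \|X\|_\infty \|Y\|_\infty$. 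The boundedness of $X$ and $Y$ follows from Lemma~\ref{lem:wt-bound} (weights at most $((1-\delta)/\delta)^k$) together with the assumption that the outcomes, and hence $\log \Pro{Y_{i,t}\mid \T_{N,T}}$, are uniformly bounded. Plugging in the rate $\varepsilon = O((t'-t)^{1-\gamma})$ yields the claimed bound.

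The main obstacle will be the intermediate covariate sequence $L_{i,t+1:t'}$: it depends on both blocks of treatments, so I cannot naively pull the time-$t'$ expectation outside. The fix is to first integrate out $L_{i,t+1:t'}$ using the tower law, reducing to a statement about $\Pro{A_{i,t'-k+1:t'}\mid A_{i,t-k+1:t}, L_{i,1:t}}$, which is exactly the object controlled by Assumption~\ref{asn:decay}. Once this reduction is carried out, the remainder of the argument is routine multiplicative-error bookkeeping.
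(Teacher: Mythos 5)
Your overall strategy coincides with the paper's: write $\cov$ as $\E{XY}-\E{X}\E{Y}$ for the two weighted residual terms, expand both expectations as in Lemmas~\ref{lem:wtlogit} and~\ref{lem:wtwptlogit} so that the weight denominators cancel against the treatment-assignment probabilities, isolate the coupling between the two time blocks, and invoke Assumption~\ref{asn:decay}. The gap is in your proposed fix for the intermediate covariates. Integrating out $L_{i,t+1:t'}$ by the tower law does \emph{not} reduce the coupling to the single object $\Pro{A_{i,t'-k+1:t'}\mid A_{i,t-k+1:t},L_{i,1:t}}$: after marginalization you are left with
\[
\sum_{L_{i,t+1:t'}}\Pro{L_{i,t+1:t'}\mid A_{i,t-k+1:t},L_{i,1:t},A_{i,t'-k+1:t'}}\;\E{\left(Y_{i,t'}-\T(i,t',A_{i,t'-k+1:t'})\right)^2 \mid A_{i,t'-k+1:t'},L_{i,1:t'}},
\]
which is still a function of the early block $A_{i,t-k+1:t}$, because the covariate dynamics depend causally on past treatments (the edges $A_{i,s}\to L_{i,s+1}$ in figure~\ref{fig:causal_graph}). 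Assumption~\ref{asn:decay} controls only the ratio $\Pro{\tilde{a}_{i,t'-k+1:t'}\mid \ell_{i,1:t},a_{i,t-k+1:t}}/\Pro{\tilde{a}_{i,t'-k+1:t'}}$; it says nothing directly about the covariate transition kernel, so the residual dependence you carry is not ``routine multiplicative-error bookkeeping'' --- it is exactly the hard part.

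This is where the paper spends its central chain of inequalities: it keeps the product $\Pro{\tA_{i,t'-k+1:t'}\mid\tL_{i,1:t},\tA_{i,t-k+1:t}}\,\Pro{\tL_{i,t+1:t'}\mid\tA_{i,t-k+1:t},\tL_{i,1:t},\tA_{i,t'-k+1:t'}}$ together, rewrites it via Bayes' rule as a ratio of joint probabilities, and applies Assumption~\ref{asn:decay} in both directions (to a numerator and to lower-bound a denominator), arriving at the factor $\frac{1+\eps}{1-\eps}\le 1+3\eps$ multiplying the decoupled product $\Pro{\tA_{i,t'-k+1:t'}}\,\Pro{\tL_{i,t+1:t'}\mid\tL_{i,1:t},\tA_{i,t'-k+1:t'}}$; only after that does moment bookkeeping finish the proof. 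A secondary divergence: your remainder bound $O(\eps)\,\norm{X}_\infty\norm{Y}_\infty$ needs uniformly bounded outcomes and inherits the factor $\left(\frac{1-\delta}{\delta}\right)^{2k}$ from Lemma~\ref{lem:wt-bound}, whereas in the paper the weights have already cancelled, so the final constant $3M_2^2$ depends only on second moments of the counterfactuals and not on $k$. For the lemma as stated (fixed $k$) both give $O\left((t'-t)^{1-\gamma}\right)$, but the $k$-free constant is what the application in Lemma~\ref{lem:variance-bound} relies on when $k$ grows like $\log T$, so you should not discard it.
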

	\begin{proof}
		Let us write $\Hy_{i,1:t} = (a_{i,1:t},\ell_{i,1:t},y_{i,1:t-1})$ to denote the history upto time $t$ excluding the outcome at time $t$.
		\begin{align*}
		& \cov\left(w_{i,t}(Y_{it} - T(i,t,S(i,t)) )^2, w_{i,t'}(Y_{it'} - T(i,t',S(i,t')) )^2\right) \\&= \E{w_{i,t}(Y_{it} - T(i,t,S(i,t)) )^2 w_{i,t'}(Y_{it'} - T(i,t',S(i,t')) )^2 } \\&- \E{w_{i,t} (Y_{it} - T(i,t,S(i,t)) )^2}\E{w_{i,t'} (Y_{it'} - T(i,t',S(i,t')) )^2 } 
		\end{align*}
		Now we use lemma \ref{lem:wtlogit} and \ref{lem:wtwptlogit} to substitute the terms and obtain the following bound.
		\begin{align*}
		&\sum_{A_{i,t-k+1,t}}\sum_{L_{i,1:t}} \sum_{Y_{i,t}}  \Pro{A_{i,t-k+1:t}} \Pro{L_{i,1:t}} \Pro{Y_{i,t}|A_{i,t-k+1:t},L_{i,1:t}}\times  \\
	&\left(Y_{i,t}(A_{i,t-k+1:t}, L_{i,1:t}) - \T(i,t,A_{i,t-k+1:t})\right)^2\times   \\
	&\sum_{\tA_{i,t'-k+1:t'} } \sum_{\tL_{i,1:t'}} \sum_{\tY_{i,t'}} \Pro{\tilde{A}_{i,t'-k+1:t'}} \Pro{\tL_{i,1:t}} \Pro{\tY_{i,t'}|\tA_{i,t'-k+1:t'},\tL_{i,1:t'}}\\ &\left(\tY_{i,t'}(\tA_{i,t'-k+1:t'}, \tL_{i,1:t'}) - \T(i,t',\tA_{i,t'-k+1:t'})\right)^2 \\
	& \left(\Pro{\tA_{i,t'-k+1:t'}|\tL_{i,1:t},\tA_{i,t-k+1:t}} \Pro{\tL_{i,t+1:t'}|\tA_{i,t-k+1:t}, \tL_{i,1:t},\tA_{i,t'-k+1:t'}} \right. \\
	&- \left. \Pro{\tA_{i,t'-k+1:t'}} \Pro{\tL_{t+1:t'}|\tL_{i,1:t},\tA_{i,t'-k+1:t'}}  \right)  \\
	&= \sum_{A_{i,t-k+1,t}}\sum_{L_{i,1:t}} \sum_{Y_{i,t}}  \Pro{A_{i,t-k+1:t}} \Pro{L_{i,1:t}} \Pro{Y_{i,t}|A_{i,t-k+1:t},L_{i,1:t}} \times \\ &\left(Y_{i,t}(A_{i,t-k+1:t}, L_{i,1:t}) - \T(i,t,A_{i,t-k+1:t})\right)^2 \times \\
	&\sum_{\tA_{i,t'-k+1:t'} }\Pro{\tilde{A}_{i,t'-k+1:t'}}  \sum_{\tL_{i,1:t}} \Pro{\tL_{i,1:t}}  \sum_{\tL_{i,t+1:t'}} \E{(\tY_{i,t'} - \T(i,t',\tA_{i,t'-k+1:t'}))^2 | \tL_{i,1:t'}} \times \\
	&\left(\Pro{\tA_{i,t'-k+1:t'}|\tL_{i,1:t},\tA_{i,t-k+1:t}} \Pro{\tL_{i,t+1:t'}|\tA_{i,t-k+1:t}, \tL_{i,1:t},\tA_{i,t'-k+1:t'}} \right. \\
	&- \left. \Pro{\tA_{i,t'-k+1:t'}} \Pro{\tL_{t+1:t'}|\tL_{i,1:t},\tA_{i,t'-k+1:t'}}  \right) 
		\end{align*}
		Now we use assumption \ref{asn:decay} to bound the term on the last line. Let $\eps = (t' - t)^{1-\gamma}$.
		\begin{align*}
		&\Pro{\tA_{i,t'-k+1:t'}|\tL_{i,1:t},\tA_{i,t-k+1:t}} \Pro{\tL_{i,t+1:t'}|\tA_{i,t-k+1:t}, \tL_{i,1:t},\tA_{i,t'-k+1:t'}} \\
		&\le (1+\eps) \Pro{\tA_{i,t'-k+1:t'}} \frac{\Pro{\tL_{i,t+1:t'},\tA_{i,t'-k+1:t'}| \tL_{i,1:t},\tA_{i,t-k+1:t}}}{\Pro{\tA_{i,t'-k+1:t'}| \tL_{i,1:t},\tA_{i,t-k+1:t}}} \\
		&\le \frac{1 + \eps}{1 - \eps} \Pro{\tL_{i,t+1:t'},\tA_{i,t'-k+1:t'}| \tL_{i,1:t},\tA_{i,t-k+1:t}} \\
		&\le \frac{1 + \eps}{1 - \eps}  \frac{\Pro{\tL_{i,t+1:t'},\tA_{i,t-k+1:t}, \tL_{i,1:t},\tA_{i,t'-k+1:t'}} }{\Pro{\tL_{i,1:t},\tA_{i,t-k+1:t}} } \\
		&\le \frac{1 + \eps}{1 - \eps} \frac{\Pro{\tA_{i,t-k+1:t} |  \tL_{i,1:t}} \Pro{\tL_{i,1:t}} \Pro{\tL_{i,t+1:t'}|\tL_{i,1:t},\tA_{i,t'-k+1:t'}} \Pro{\tL_{i,1:t},\tA_{i,t'-k+1:t'}} }{\Pro{\tA_{i,t-k+1:t} | \tL_{i,1:t}} \Pro{\tL_{i,1:t}} }  \\
		&\le \frac{1 + \eps}{1 - \eps} \Pro{\tL_{i,t+1:t'}|\tL_{i,1:t},\tA_{i,t'-k+1:t'}} \Pro{\tL_{i,1:t},\tA_{i,t'-k+1:t'}} \\
		&\le (1 + 3\eps)\Pro{\tL_{i,t+1:t'}|\tL_{i,1:t},\tA_{i,t'-k+1:t'}} \Pro{\tA_{i,t'-k+1:t'}} 
		\end{align*}
		Substituting the above result, we get the following bound on the covariance.
		\begin{align*}
		&3\eps \sum_{A_{i,t-k+1,t}}\sum_{L_{i,1:t}} \sum_{Y_{i,t}}  \Pro{A_{i,t-k+1:t}} \Pro{L_{i,1:t}} \Pro{Y_{i,t}|A_{i,t-k+1:t},L_{i,1:t}} \times \\ &\left(Y_{i,t}(A_{i,t-k+1:t}, L_{i,1:t}) - \T(i,t,A_{i,t-k+1:t})\right)^2 \times \\
	&\sum_{\tA_{i,t'-k+1:t'} }\Pro{\tilde{A}_{i,t'-k+1:t'}}  \sum_{\tL_{i,1:t}} \Pro{\tL_{i,1:t}}  \sum_{\tL_{i,t+1:t'}} \E{(\tY_{i,t'} - \T(i,t',\tA_{i,t'-k+1:t'}))^2 | \tL_{i,1:t'}} \times \\
	& \Pro{\tA_{i,t'-k+1:t'}} \Pro{\tL_{t+1:t'}|\tL_{i,1:t},\tA_{i,t'-k+1:t'}} 
		\end{align*}
		Recall that the second moments of the couterfactuals are bounded. This implies that there exists a constant $M_2 > 0$ such that for all $\tA_{i,t'-k+1,t'}$ and $\tL_{i,1:t'}$, we have $\E{(\tY_{i,t'} - \T(i,t',\tA_{i,t'-k+1:t'}))^2 | \tL_{i,1:t'}}  < M_2$. Substituting this bound above and simplifying we get the following bound on the covariance.
		\begin{align*}
		&3M_2 \eps \sum_{A_{i,t-k+1,t}}\sum_{L_{i,1:t}} \sum_{Y_{i,t}}  \Pro{A_{i,t-k+1:t}} \Pro{L_{i,1:t}} \Pro{Y_{i,t}|A_{i,t-k+1:t},L_{i,1:t}} \times\\  &\left(Y_{i,t}(A_{i,t-k+1:t}, L_{i,1:t}) - \T(i,t,A_{i,t-k+1:t})\right)^2 
	\sum_{\tA_{i,t'-k+1:t'} }\Pro{\tilde{A}_{i,t'-k+1:t'}}^2  \sum_{\tL_{i,1:t}} \Pro{\tL_{i,1:t}}  \\
	&\le 3 M_2 \eps \sum_{A_{i,t-k+1,t}}\sum_{L_{i,1:t}}  \Pro{A_{i,t-k+1:t}} \Pro{L_{i,1:t}} \E{(Y_{i,t} - \T(i,t,A_{i,t-k+1:t}))^2| L_{i,1:t}} \\
	&\le 3 M_2^2 \eps
		\end{align*}
	\end{proof}
	
\end{document}